\title{Enumerating Minimal Unsatisfiable Cores of LTL$_f$ formulas}
\author {
    Antonio Ielo\textsuperscript{\rm 1},
    Giuseppe Mazzotta\textsuperscript{\rm 1},
    Rafael Pe\~naloza\textsuperscript{\rm 2},
    Francesco Ricca\textsuperscript{\rm 1}
}
\newtheorem{definition}{Definition}
\newtheorem{lemma}[definition]{Lemma}
\newtheorem{theorem}[definition]{Theorem}
\newtheorem{example}[definition]{Example}
\newcommand{\ttt}[1]{\texttt{#1}}
\newcommand{\Next}{\textsf{X}\ }
\newcommand{\Until}{\  \textsf{U}\ }
\newcommand{\LTLf}{LTL$_\text{f}$\xspace}
\newcommand{\MUS}{\textsf{MUS}}
\newcommand{\MSM}{\textsf{MSM}}
\newcommand{\MUC}{\textsf{MUC}}
\newcommand{\Formula}{\textsf{Formula}}
\newcommand{\ObjAtoms}{\textsf{Atoms}}
\newcommand{\naf}{\mathop{not}}
\begin{document}

\maketitle

\begin{abstract}
Linear Temporal Logic over finite traces (\LTLf) is a widely used formalism with applications in AI, process mining, model checking, and more. 
The primary reasoning task for LTLf is satisfiability checking; yet, the recent focus on explainable AI has increased interest in analyzing inconsistent formulas, making the enumeration of minimal explanations for infeasibility a relevant task also for \LTLf. 
This paper introduces a novel technique for enumerating minimal unsatisfiable cores (MUCs) of an \LTLf specification. 
The main idea is to encode a \LTLf formula into an Answer Set Programming (ASP) specification, such that the minimal unsatisfiable subsets (MUSes) of the ASP program directly correspond to the MUCs of the original \LTLf specification.
Leveraging recent advancements in ASP solving yields a MUC enumerator achieving good performance in experiments conducted on established benchmarks from the literature.
\end{abstract}

\section{Introduction}
Linear temporal logic over finite traces (\LTLf) \cite{DBLP:conf/ijcai/GiacomoV13} 
is a simple, yet powerful language for expressing and reasoning
about temporal specifications, that is known to be particularly well-suited for applications in Artificial Intelligence (AI)~\cite{DBLP:journals/amai/BacchusK98,DBLP:conf/kr/CalvaneseGV02,DBLP:conf/aips/GiacomoMMS16,DBLP:conf/ecp/GiacomoV99}.

Perhaps its most widely recognised
use to-date is as the logic underlying temporal process modelling 
languages such as Declare \cite{PeSV07}.
Very briefly, a Declare specification is a set of contraints on
the potential evolution of a process, which is expressed through
a syntactic variant of a subclass of \LTLf formulas. The full
specification can thus be seen as a conjunction of \LTLf formulas. 
As specifications become bigger---specially when they are automatically
mined from trace logs \cite{CiMo22}, it is not uncommon
to encounter inconsistencies (i.e., business process models which
are intrinsically contradictory) or other errors. 

To understand and correct these errors, it is thus important to 
highlight the sets of formulas in the specification that are 
responsible for them~\cite{10.1093/logcom/exad049,DBLP:journals/jair/RoveriCFG24}. Specifically, we are interested in computing
the \emph{minimal unsatisfiable cores} (MUCs): subset-minimal subsets 
of formulas (from the original specification) that are collectively
inconsistent~\cite{DBLP:journals/constraints/LiffitonPMM16,10.1093/logcom/exad049,DBLP:journals/jair/RoveriCFG24}. 
These can be seen as the prime causes of the error. 
Notably, a single specification can yield multiple MUCs of varying sizes, depending on the specific constraints involved. 
Exploring more than one MUC can be crucial for analyzing and understanding the causes of incoherence (as recognized in explainable AI~\cite{DBLP:journals/ai/Miller19,DBLP:conf/kr/AudemardKM20}). Thus, a system capable of efficiently enumerating MUCs would be of significant value.

A similar problem has been studied in the field of answer set 
programming (ASP) \cite{DBLP:journals/cacm/BrewkaET11,DBLP:journals/ngc/GelfondL91}, where the
goal is to find \emph{minimal unsatisfiable subsets} (MUS) of 
atoms that make an ASP program incoherent~\cite{DBLP:journals/ai/BrewkaTU19,DBLP:conf/sat/Mencia020,DBLP:journals/ai/AlvianoDFPR23}. 
In recent years, efficient implementations of MUS enumerators have been presented \cite{DBLP:journals/ai/AlvianoDFPR23}. 

Our goal in this paper is to take advantage of both the declarativity of tha ASP language and the efficiency of ASP systems to also enumerate MUCs of \LTLf formulas. Hence, we present a new transformation which constructs, given a set of \LTLf formulas, an ASP program whose MUSes are in a biunivocal corrispondence with the MUCs of the original specification. 
Importantly, although we base our reduction 
on a well-known 
encoding of \LTLf bounded satisfiability~\cite{DBLP:journals/jair/FiondaG18,DBLP:conf/lpnmr/FiondaIR24}, 
the idea is general enough to be applicable to other decision procedures, as long as it can be expressed in ASP. 
To improve its efficiency, our enumerator checks for unsatisfiability iteratively by considering traces of increasing length based on a progression strategy~\cite{DBLP:journals/constraints/MorgadoHLPM13}.
To the best of our knowledge, we provide the first MUC enumerator
for \LTLf. 

We empirically compared our implementation with existing systems designed to produce only \emph{one} MUC (or just one potentially non-minimal unsatisfiable core)~\cite{10.1093/logcom/exad049,DBLP:journals/jair/RoveriCFG24} and observed that our system---despite being more general---is competitive against those on established benchmarks from the literature. 
Importantly, MUC enumeration is very efficient as well.

\section{Related Work}
The task of computing MUCs has been considered, under different names,
for several representation languages including propositional logic
\cite{LiSa08}, constraint satisfaction problems \cite{MenciaM14}, databases \cite{MeliouRS14}, description
logics \cite{SchlobachC03}, and ASP \cite{DBLP:journals/ai/AlvianoDFPR23} among many others.
For a general overview of the
task and known approaches to solve it, see \cite{Pena20}.

Although the task was briefly studied for LTL (over infinite traces)
in \cite{BaPe-JAR10}, it was only recently considered for the 
specific case of \LTLf \cite{10.1093/logcom/exad049,DBLP:journals/jair/RoveriCFG24}.
Interestingly, for \LTLf the focus has been only on computing one
(potentially non-minimal) unsatisfiable core. To our knowledge,
we are the first to propose a full-fletched \LTLf MUC enumerator.

The idea of using a highly optimised reasoner from one language to 
enumerate MUCs from another one was already considered, first 
exploiting SAT solvers \cite{SeVe09} and later on using ASP solvers 
\cite{PeRi22}. Our approach falls into the latter class.
Our reduction to ASP is inspired on the automata-based satisfiability
procedure, previously used for SAT-based satisfiability checking
\cite{DBLP:journals/jair/FiondaG18,LiPZVR20}, alongside an incremental
approach that verifies the (non-)existence of models up to a certain
length.

\section{Preliminaries}
We briefly recap required notions of Linear Temporal Logic over Finite Traces (\LTLf)~\cite{DBLP:conf/ijcai/GiacomoV13} and Answer Set Programming (ASP)~\cite{DBLP:journals/cacm/BrewkaET11}.

\subsection{Answer Set Programming}
\paragraph{Syntax and semantics.}
A \emph{term} is either a \emph{variable} or a \emph{constant}, where \emph{variables} are alphanumeric strings starting with uppercase letter, while \emph{constants} are either integer number or alphanumeric string starting with lowercase letter. An \emph{atom} is an expression of the form $p(t_1,\ldots,t_n)$ where $p$ is a predicate of ariety $n$ and $t_1,\ldots,t_n$ are terms; it is \emph{ground} if all its terms are constants. We say that an atom $p(t_1,\dots,t_k)$ has \textit{signature} $p/k$. An atom $\alpha$ \textit{matches} a signature 
$p/k$ if $\alpha = p(t_1,\dots,t_k)$.
A \emph{literal} is either an atom $a$ or its negation $\naf a$, where $\naf$ denotes the negation as failure. A literal is said to be \emph{negative} if it is of the form $\naf a$, otherwise it is positive. For a literal $l$, $\overline{l}$ denotes the complement of $l$. More precisely, $\overline{l}=a$ if $l = \naf a$, otherwise $\overline{l}=\naf a$.
A \emph{normal rule} is an expression of the form $h \leftarrow b_1,\ldots,b_n$ where $h$ is an atom referred to as \emph{head}, denoted by $H_r$, that can also be omitted, $n\geq 0$, and $b_1,\ldots,b_n$ is a conjunction of literals referred to as \emph{body}, denoted by $B_r$. 
In particular a normal rule is said to be a \emph{constraint} if its head is omitted, while it is said to be a \emph{fact} if $n=0$.
A normal rule $r$ is \emph{safe} if each variable $r$ appears at least in one positive literal in the body of $r$. 
A \emph{program} is a finite set of safe normal rules.
In what follows we will use also choice rules, which abbreviate complex expressions~\cite{DBLP:journals/tplp/CalimeriFGIKKLM20}. 
A \emph{choice element} is of the form $h:l_1,\ldots,l_k$, where $h$ is an atom, and $l_1,\ldots,l_k$ is a conjunction of literals. A \emph{choice rule} is an expression of the form $\{e_1;\ldots;e_m\}\leftarrow b_1,\ldots,b_n$, which is a shorthand for the set of normal rules $h_i \leftarrow l_1^i,\ldots,l_{k_i}^i, b_1,\ldots,b_n,\naf nh_i$; $nh_i \leftarrow l_1^i,\ldots,l_{k_i}^i, b_1,\ldots,b_n,\naf h_i$, for each $i \in {1,\ldots,m}$ where $e_i$ are of the form $h_i:l_1^i,\ldots,l_{k_i}^i$ and $nh_i$ is a fresh atom not appearing anywhere else.

Given a program $P$, the \emph{Herbrand Universe} of $P$, $\mathcal{U}_P$, denotes the set of constants that appear in $P$, while the \emph{Herbrand Base}, $\mathcal{B}_P$, denotes the set of ground atoms obtained from predicates in $P$ and constants in $\mathcal{U}_P$. Given a program $P$, and $r \in P$, $ground(r)$ denotes the set of ground instantiations of $r$ obtained by replacing variables in $r$ with constants in $\mathcal{U}_P$. Given a program $P$, $ground(P)$ denotes the union of ground instantiations of rules in $P$. An \emph{interpretation} $I \subseteq \mathcal{B}_P$ is a set of atoms. Given an interpretation $I$, a positive (resp. negative) literal $l$ is true w.r.t.\ $I$ if $l \in I$ (resp. $\overline{l} \notin I$); otherwise it is false. A conjunction of literal is true w.r.t $I$ if all its literals are true w.r.t.\ $I$. An interpretation $I$ is a \emph{model} of $P$ if for every rule $r \in ground(P)$, $H_r$ is true whenever $B_r$ is true. Given a program $P$ and an interpretation $I$, the (Gelfond-Lifschitz) reduct~\cite{DBLP:journals/ngc/GelfondL91}, denoted by $P^I$, is defined as the set of rules obtained from $ground(P)$ by deleting those rules whose body is false w.r.t $I$ and removing all negative literals that are true w.r.t.\ $I$ from the body of remaining rules.
Given a program $P$, and a model $I$, then $I$ is also a \emph{answer set} of $P$ if no such $I'\subseteq I$ exists such that $I'$ is a model of $P^I$. For a program $P$, let $AS(P)$ denotes the set of answer sets of $P$, then $P$ is said to be \emph{coherent} if $AS\neq \emptyset$, otherwise it is \emph{incoherent}.
Given an answer set $S$ and a signature $\sigma$, the \textit{projection of $S$ on $\sigma$}
is the set $S_{|\sigma} = \{\alpha \in S: \alpha \text{ matches } \sigma\}$.

\paragraph{MUSes and MSMs} %
Consider a program $P$ and a set of objective atoms $O \subseteq \mathcal{B}_P$.
For $S \subseteq O$, we denote by $\textsf{enforce}(P,O,S)$ the program obtained from $P$ by adding a choice rule over atoms in $O$ (i.e. $\{o_1;\ldots;o_n\}\leftarrow$) and a set of constraints of the form $\leftarrow \naf o$, for every $o \in S$.
Intuitively, $\textsf{enforce}(P,O,S)$ denotes an augmentation of the program $P$ in which the objective atoms can be arbitrarily choosen (i.e. either as true or false) but the atoms in $S$ are \emph{enforced} to be true.

An \emph{unsatisfiable subset} for $P$ w.r.t.\ the set of objective atoms $O$ is a set of atoms $U \subseteq O$ such that $\textsf{enforce}(P,O,U)$ is incoherent. $US(P,O)$ denotes the set of unsatisfiable subsets of $P$ w.r.t.\ $O$. An unsastisfiable subset $U \in US(P,O)$ is a \emph{minimal unsatisfiable subset} (\MUS) of $P$ w.r.t.\ $O$ iff for every $U' \subset U$, $U' \notin US(P,O)$.
Analogously, an answet set $M\in AS(P)$ is a \emph{minimal stable model}~(\MSM) of $P$ w.r.t.\ the set of objective atoms $O$ if there is no answer set $M'\in AS(P)$ with $(M'\cap O) \subset (M \cap O)$.

\subsection{Linear Temporal Logic over Finite Traces}
Linear Temporal Logic~(LTL) \cite{DBLP:conf/focs/Pnueli77} is an extension of 
propositional logic which allows to reason over infinite sequences of propositional 
interpretations or \emph{traces}. \LTLf \cite{DBLP:conf/ijcai/GiacomoV13} is
a variant of this logic that considers only finite traces. Let $\mathcal{A}$ be a 
finite set of propositional symbols. The class of \LTLf formulas over $\mathcal{A}$ 
is defined according to the grammar
\[
\varphi := a \mid \varphi \land \varphi \mid \lnot \varphi \mid \varphi\Until\varphi \mid \Next \varphi \mid \top
\]
where $a \in \mathcal{A}$. 
A formula is in \emph{conjunctive form} if it is expressed as a conjunction of
formulas. In this case, we often represent a formula as the set of its
conjuncts; i.e., the formula $\phi_1\land\cdots\land\phi_k$ is expressed
by the set $\{\phi_1,\ldots,\phi_k\}$.

A \emph{state} is any subset of $\mathcal{A}$; a \emph{trace} 
is a finite sequence $\pi = \sigma_0 \cdots \sigma_k$ of states; in this case,
the trace has length $|\pi| = k+1$.
The $i$-th state of the trace $\pi$ is denoted by $\pi(i) = \sigma_i$
The \emph{satisfaction relation} is defined recursively over the structure of 
$\varphi$. Let $\pi$ be a trace and $0 \le i < |\pi|$. 
We say that $\pi$ satisfies $\varphi$ at time $i$, denoted by $\pi,i\models\varphi$ 
iff:
\begin{itemize}
\item $\pi, i \models a \in \mathcal{A}$ iff $a \in \pi(i)$;
\item $\pi, i \models \top$;
\item $\pi, i \models \Next \varphi'$ iff $i<|\pi|-1$ and $\pi, i+1 \models \varphi'$;
\item $\pi, i \models \psi \land \phi$ iff $\pi,i \models \psi$ and $\pi,i \models \phi$; and
\item $\pi, i \models \psi \Until \phi$ if there exists $i \le j < |\pi|$ s.t.\ 
    $\pi,j\models \phi$ and for all $i \le k < j$, $\pi,k\models\psi$.
\end{itemize}
The trace $\pi$ is a \emph{model} of $\varphi$ (denoted by $\pi\models\varphi$) 
whenever $\pi, 0 \models \varphi$. 
The \emph{satisfiability problem} is the problem of deciding whether a 
formula $\varphi$ admits a model; i.e., if there exists $\pi$ such that 
$\pi\models\varphi$. \LTLf satisfiability is well-known to be PSpace-complete
\cite{DBLP:conf/ijcai/GiacomoV13}.

Given an unsatisfiable formula $\varphi=\{\phi_1,\ldots,\phi_k\}$ in conjunctive 
form, a \emph{minimal unsatisfiable core}~(MUC) of $\varphi$ is an unsatisfiable 
formula $\psi \subseteq \varphi$ which is minimal (w.r.t.\ set inclusion); i.e.,
removing any conjunct from $\psi$ yields a satisfiable formula \cite{10.1093/logcom/exad049}. 
Complexity-wise, it is known that a single formula may have 
exponentially many MUCs, but computing one MUC requires only polynomial
space; just as deciding satisfiability \cite{Pena19,Pena20}.

\section{Method}

Technique approach proposed in this paper relies on leveraging ASP \emph{minimal unsatisfiable sets} (MUSes) enumeration algorithms to generate a sequence of candidate \emph{minimal unsatisfible cores} (MUCs) for an \LTLf formula. In order to put in place such approach, a formal connection between these objects must be established. In this section, we introduce the notion of \emph{probe} and \emph{$k$-MUC} to investigate this relationship. A probe is an abstraction over the class of logic programs with suitable properties to apply the approach herein presented; $k$-MUCs are a \textit{relaxation} wrt model length of the concept of MUC, which reveals to be more suitable for ASP-based reasoning.

\subsection{MUS and Probes}

In the rest of the paper we adopt the notation introduced in in~\cite{10.1093/logcom/exad049}. Let $\varphi = \{\phi_1, \dots, \phi_n\}$ be a formula in conjunctive form, where $\phi_i$ is a \textit{conjunct} of $\varphi$. With a slight abuse of notation, we will identify $\varphi$ with the \textit{set} of its conjuncts.

Our first assumption is that there exists an uniform way to encode \LTLf formulae in conjunctive normal form into logic programs. In particular, we are interested in \textit{encodings where original conjuncts of $\varphi$ can be told apart by means of special atoms}. More formally:

\begin{definition}[Reification Function]\label{def:reification} A \textit{reification function} for a formula $\varphi$ is a function that maps $\varphi$ into a logic program whose Herbrand base contains an atom $phi(i)$ for each $\phi_i \in \varphi$. We denote the set of atoms matching signature $phi/1$ by $\mathcal{O}(\varphi)$.
\end{definition}

Each subset $S \subseteq \mathcal{O}(\varphi)$ uniquely identifies the set of conjuncts $\psi = \{\phi_i: phi(i) \in \mathcal{O}(\varphi)\} \subseteq \varphi$. Therefore, we denote $\psi$ by $\Formula(S)$ and $S$ by $\ObjAtoms(\psi)$.

Reification functions enable to encode \LTLf formulae into logic programs. Since in this paper we are concerned with notions of \emph{satisfiability}, \emph{unsatisfiability} wrt \emph{subset minimality} of \LTLf formulae, among all possible reification functions, we are interested in ones that preserve as much information about these properties. In particular, we introduce the notion of \emph{probe}.

\begin{definition}[$k$-Probe]\label{def:probe} Let $k \in \mathbb{N}$. A reification function $\rho$ is a \emph{probe of depth $k$} (or \emph{$k$-probe} for short) for $\varphi$ if for each set $S \subseteq \mathcal{O}(\varphi)$ it holds that $\Formula(S)$ admits a model of length at most $k$ if and only if there exists an answer set $M$ of $\rho(\varphi)$ such that $S = M_{|\mathcal{O}(\varphi)}$.
\end{definition}

There exist multiple ASP encodings that satisfy the definition of probe. Intuitively, one can obtain a probe by adapting any ASP encoding for bounded \LTLf satisfiability~\cite{DBLP:conf/lpnmr/FiondaIR24,DBLP:journals/jair/FiondaG18}. Section~\ref{sec:theprobe} features an extended and detailed example. Here, we focus on how probes relate to MUCs of $\varphi$.

\begin{lemma}\label{lemma:musimpliesunsatorkbound}
Let $\rho$ be a probe of depth $k$ for $\varphi$. Let $S$ be a minimal unsatisfiable subset of $\rho(\varphi)$ wrt the objective atoms $\mathcal{O}(\varphi)$. Then $\Formula(S)$ is either an MUC of $\varphi$ or it is satisfiable but its shortest satisficing trace has length greater than $k$.
\end{lemma}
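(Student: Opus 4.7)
The plan is to reduce the lemma to a single bridging fact:
\[
\textsf{enforce}(\rho(\varphi),\mathcal{O}(\varphi),U)\ \text{is coherent} \iff \Formula(U)\ \text{has a model of length at most } k.
\]
Once this is established, a short case analysis on the satisfiability of $\Formula(S)$ closes the argument.

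First I would prove the bridging fact from Definition~\ref{def:probe}. For the \emph{if} direction: given $\pi\models\Formula(U)$ with $|\pi|\le k$, I set $U^* := \{phi(i) : \pi\models\phi_i\}$, so that $U\subseteq U^*$ and $\pi\models\Formula(U^*)$; the probe property then supplies an answer set $M$ of $\rho(\varphi)$ with $M_{|\mathcal{O}(\varphi)} = U^*$. The constraints $\leftarrow\naf o$ for $o\in U$ are satisfied because $U\subseteq U^*\subseteq M$, and the added choice rule is consistent with $M$'s content on $\mathcal{O}(\varphi)$, yielding an answer set of $\textsf{enforce}(\rho(\varphi),\mathcal{O}(\varphi),U)$. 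For the \emph{only if} direction, an answer set $N$ of the enforcement program determines an answer set of $\rho(\varphi)$ whose $\mathcal{O}(\varphi)$-projection $U'$ contains $U$; the probe property then gives $\pi\models\Formula(U')$ of length $\le k$, and since $\Formula(U)\subseteq \Formula(U')$ the same $\pi$ also satisfies $\Formula(U)$.

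Granting the bridging fact, the lemma follows immediately from the two conditions defining a \MUS. Condition (a): $\textsf{enforce}(\rho(\varphi),\mathcal{O}(\varphi),S)$ is incoherent, so $\Formula(S)$ admits no model of length $\le k$. Condition (b): for every $T\subsetneq S$, $\textsf{enforce}(\rho(\varphi),\mathcal{O}(\varphi),T)$ is coherent, so $\Formula(T)$ is satisfiable. I then split on whether $\Formula(S)$ is satisfiable: if it is unsatisfiable, (b) makes every proper subformula satisfiable, so $\Formula(S)$ is an \MUC\ of $\varphi$ by definition; if it is satisfiable, (a) forces its shortest witnessing trace to have length strictly greater than $k$.

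The main obstacle is the bridging fact itself, specifically checking that the choice rule and constraints inserted by $\textsf{enforce}$ do not create or destroy stable models of $\rho(\varphi)$ in a way that breaks the probe correspondence. In the concrete probes used elsewhere in the paper, $\rho(\varphi)$ already embeds a free choice over $\mathcal{O}(\varphi)$, making the interaction immediate; in the general case, one has to verify at the level of the Gelfond--Lifschitz reduct that enforcing a subset of objective atoms exactly mirrors selecting an answer set of $\rho(\varphi)$ whose $\mathcal{O}(\varphi)$-projection contains that subset.
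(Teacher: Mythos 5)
Your proposal is correct and follows essentially the same route as the paper's own proof: both halves of the conclusion are read off from the two clauses of the \MUS{} definition via the probe property (Definition~\ref{def:probe}) together with monotonicity of \LTLf conjunction, followed by a case split on whether $\Formula(S)$ is satisfiable. The only difference is one of explicitness --- your bridging fact and the saturation $U^*=\{phi(i):\pi\models\phi_i\}$ (needed because Definition~\ref{def:probe} speaks of \emph{exact} projections while $\textsf{enforce}$ only forces a subset) spell out the interaction between the added choice rule and the probe correspondence, which the paper's proof compresses into ``all proper subsets can be extended to answer sets''; your closing caveat, that the only-if direction of the bridge requires the choice rule introduced by $\textsf{enforce}$ not to create answer sets absent from $\rho(\varphi)$, is well taken, as Definition~\ref{def:probe} alone does not guarantee this for arbitrary reifications, though it holds for the concrete probe of Section~\ref{sec:theprobe}.
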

\begin{proof} Assume $S$ is a minimal unsatisfiable subset wrt $\mathcal{O}(\varphi)$. Then, all its (proper) subsets can be extended to answer sets --- thus, interpreting them as formulae yields an \LTLf formula that admits a model of length at most $k$, by Definition~\ref{def:probe}. Hence, all proper subsets of $\Formula(S)$ are satisfiable, while $\Formula(S)$ itself is either unsatisfiable or its shortest model trace has a length greater than $k$. In the former case, it matches the definition of MUC.
\end{proof}

We provide an example.
\begin{example}\label{example:shrinkmuc} Consider the formula $\varphi = \{\Next^5 \beta, \Next^5 \lnot \beta\}$. This formula has a unique MUC, namely $\Next^5 \beta \land \Next^5 \lnot \beta$. If we consider a probe $\rho_3 = \rho(3, \varphi)$, it has two MUSes, namely $\{\Next^5 \beta\}, \{\Next^5 \lnot \beta\}$, since these formulae \textit{do not admit models of length at most 3}. If we consider instead probes of depth at least 5, it is now possible to detect the MUC through the (unique) MUS $\{\Next^5 \beta, \Next^5 \lnot \beta\}$
\end{example}

Formulae exhibiting the property shown in the statement of Lemma~\ref{lemma:musimpliesunsatorkbound} are the key objects which allow us to levarage MUS enumeration to enumerate MUCs. Thus, we introduce a definition.

\begin{definition}[$k$-bound MUC]\label{def:kmuc}
Let $k \in \mathbb{N}$. A \emph{$k$-bound MUC} (or \emph{$k$-MUC}) for the formula $\varphi$ is a minimal subset of $\varphi$ 
that does not admit a model of length at most $k$. We denote by $\MUC^k(\varphi)$ the set of all $k$-MUCs for a formula $\varphi$. 
\end{definition}

\begin{lemma}
\label{lem:unsatkmucismuc}
Let $S \in \MUC^k(\varphi)$. If $S$ is unsatisfiable, then $S$ is a MUC for $\varphi$.
\end{lemma}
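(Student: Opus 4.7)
The plan is to unfold the two definitions involved (MUC and $k$-MUC) and observe that the only real content is a trivial one-way implication: having a model of length at most $k$ entails satisfiability. Concretely, I would argue as follows.

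First, I would fix $S \in \MUC^k(\varphi)$ and recall from Definition~\ref{def:kmuc} that this means (i) $S \subseteq \varphi$ admits no model of length at most $k$, and (ii) $S$ is minimal with this property, i.e., every proper subset $S' \subsetneq S$ does admit a model of length at most $k$. To establish that $S$ is a MUC, I need two things: that $S$ itself is unsatisfiable, and that every proper subset of $S$ is satisfiable.

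The first is simply the hypothesis of the lemma. For the second, I would take any $S' \subsetneq S$ and invoke minimality of $S$ as a $k$-MUC: there exists a trace $\pi$ with $|\pi|\le k$ such that $\pi \models S'$. In particular $\pi$ is a model of $S'$, so $S'$ is satisfiable. Since this holds for every proper subset of $S$, $S$ matches the definition of MUC.

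Honestly, the only ``obstacle'' here is purely bookkeeping: making sure the quantifiers in the two definitions line up, in particular that the implication ``has a model of length $\le k$'' $\Rightarrow$ ``is satisfiable'' is used in the right direction (the converse is false, which is precisely the content of Example~\ref{example:shrinkmuc} and of Lemma~\ref{lemma:musimpliesunsatorkbound}). No structural or inductive argument is needed; the proof is a couple of lines of definitional chasing.
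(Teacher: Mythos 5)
Your proof is correct and follows exactly the same route as the paper's: minimality of $S$ as a $k$-MUC gives every proper subset a model of length at most $k$, hence satisfiability of all proper subsets, which together with the assumed unsatisfiability of $S$ yields the MUC definition. The only difference is that you spell out the quantifier bookkeeping more explicitly than the paper does, which is fine.
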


\begin{proof}
Follows from the fact that since $S \in \MUC^k(\varphi)$, it means that any proper subset of $S$ admits a model of length at most $k$, hence it satisfiable. If $S$ is also unsatisfiable, it matches the definition of MUC.
\end{proof}

We re-state Lemma~\ref{lemma:musimpliesunsatorkbound} adopting the new definition:

\begin{lemma}\label{lemma:musiskmuc}
Let $\varphi$ be a formula, $k \in \mathbb{N}$. $S$ is a minimal unsatisfiable subset of the $k$-probe $\rho(\varphi)$ if and only if $\Formula(S)$ is a $k$-MUC for $\varphi$.
\end{lemma}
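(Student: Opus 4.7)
The plan is to prove both directions of the biconditional using the defining property of a $k$-probe together with the monotonicity of conjunctive satisfaction: whenever $T \supseteq S$ as subsets of $\mathcal{O}(\varphi)$, any trace modelling $\Formula(T)$ also models $\Formula(S)$, since $\Formula(S) \subseteq \Formula(T)$.

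For the forward direction, I would invoke Lemma~\ref{lemma:musimpliesunsatorkbound} directly to conclude that $\Formula(S)$ admits no model of length at most $k$: the lemma says $\Formula(S)$ is either an MUC or is satisfiable only by traces of length exceeding $k$, and both cases rule out short models. For subset-minimality, I would unfold the MUS condition: each $S' \subsetneq S$ makes $\textsf{enforce}(\rho(\varphi), \mathcal{O}(\varphi), S')$ coherent, so there is an answer set of $\rho(\varphi)$ whose projection $T$ contains $S'$; the probe property of Definition~\ref{def:probe} then yields a trace of length $\leq k$ modelling $\Formula(T)$, and monotonicity pushes this down to a model of $\Formula(S')$. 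This matches Definition~\ref{def:kmuc}.

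For the reverse direction, I would start from a $k$-MUC $\Formula(S)$ and propagate the absence of short models upwards via monotonicity: for every $T \supseteq S$ with $T \subseteq \mathcal{O}(\varphi)$, $\Formula(T)$ also has no model of length $\leq k$. The contrapositive of the probe property then rules out every answer set of $\rho(\varphi)$ with projection $T$, so $\textsf{enforce}(\rho(\varphi), \mathcal{O}(\varphi), S)$ is incoherent and $S$ is unsatisfiable. Minimality transfers through the bijection $S' \mapsto \Formula(S')$ between subsets of $\mathcal{O}(\varphi)$ and subsets of $\varphi$: for each $S' \subsetneq S$, Definition~\ref{def:kmuc} supplies a model of $\Formula(S')$ of length $\leq k$, which via the probe gives an answer set of $\rho(\varphi)$ with projection exactly $S'$, certifying that $S'$ is not unsatisfiable.

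The only delicate point—not really an obstacle but worth careful handling—is bridging the probe's statement about projections equal to $T$ with the MUS notion, which talks about answer sets whose projection \emph{contains} $S$. Quantifying over supersets $T \supseteq S$ and invoking conjunctive monotonicity handles this uniformly in both directions; without that monotonicity step, the two sides of the biconditional would not line up symmetrically.
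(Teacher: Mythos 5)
Your proposal is correct and follows essentially the same route as the paper, which treats this lemma as a restatement of Lemma~\ref{lemma:musimpliesunsatorkbound} and relies on the same two ingredients: the probe property of Definition~\ref{def:probe} linking answer-set projections to bounded models, and the monotonicity of conjunctive \LTLf formulas under adding conjuncts. Your explicit handling of the mismatch between ``projection equal to $T$'' (the probe) and ``projection containing $S$'' (the MUS/\textsf{enforce} definition) via quantification over supersets is a point the paper's own argument glosses over, so this is a welcome refinement rather than a divergence.
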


Example~\ref{example:shrinkmuc} highlights an interesting property. The probe at depth $k = 3$ yields two (singleton) MUSes, that intepreted as formulae indeed do not admit models of length at most $k$. However, increasing the probe depth to $k' = 5$, yields a \textit{single MUS}, since the MUSes (of the previous probe) are actually both satisfiable if we consider models of length at most $k'$, but still (jointly) unsatisfiable considering models of length at most $k'$. Intuitively, this makes the probe at depth $k'$ \textit{more effective}, since it allows to \textit{discard} MUSes that won't lead to a MUC.

In this regard, with the aim of enumerating MUCs, the most interesting probes would be the ones that allow to detect all MUCs with no false positives. More formally, the \textit{most effective probe} is a probe at a depth $k^*$ such that for each $k' \ge k^*$ it holds if $\alpha$ is an MUS in a $k^*$-probe, it will also be an MUS in the $k'$-probe. We provide an argument to show that such a probe depth $k^*$ exists.

If $\varphi = \{\phi_1, \dots, \phi_n\}$, $\varphi$ can have at most $2^n$ MUCs. Let $h(\varphi)$ be the least integer $z \in \mathbb{N}$ such that \textit{any} satisfiable subset of $\varphi$ admits a model of length at most $z$. We refer to $h(\varphi)$ as the \textit{completeness threshold} for $\varphi$, and probes of depth greater or equal to $h(\varphi)$ as \textit{complete probes}.

This leads us to the following theorem, which establishes a bijection between MUSes of complete probes and MUCs of $\varphi$:

\begin{theorem}\label{thm:musequalskmuc} Let $\rho$ be a complete probe for $\varphi$. Then $S$ is an MUS of $P$ wrt $\mathcal{O}(\varphi)$ if and only if $\Formula(S)$ is a MUC of $\varphi$.
\end{theorem}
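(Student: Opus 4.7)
The plan is to reduce the theorem to Lemma~\ref{lemma:musiskmuc} by showing that at depth $k = h(\varphi)$ the notions of $k$-MUC and MUC coincide. Concretely, set $k := h(\varphi)$. Lemma~\ref{lemma:musiskmuc} already gives a bijection $S \leftrightarrow \Formula(S)$ between MUSes of $\rho(\varphi)$ w.r.t.\ $\mathcal{O}(\varphi)$ and elements of $\MUC^k(\varphi)$. So the theorem reduces to proving the set equality $\MUC^k(\varphi) = \MUC(\varphi)$ whenever $\rho$ is complete.

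For the inclusion $\MUC^k(\varphi) \subseteq \MUC(\varphi)$, I would pick $T \in \MUC^k(\varphi)$ and argue that $T$ is unsatisfiable: by definition $T$ has no model of length at most $k = h(\varphi)$; if $T$ were satisfiable, then being a subset of $\varphi$ it would, by the definition of the completeness threshold, necessarily admit a model of length at most $h(\varphi)$, a contradiction. Hence $T$ is unsatisfiable, and Lemma~\ref{lem:unsatkmucismuc} immediately promotes it to a MUC of $\varphi$.

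For the reverse inclusion $\MUC(\varphi) \subseteq \MUC^k(\varphi)$, take $T \in \MUC(\varphi)$. Since $T$ is unsatisfiable it trivially admits no model (of any length), so it admits no model of length at most $k$. To conclude $T \in \MUC^k(\varphi)$ I still need minimality: given any proper subset $T' \subsetneq T$, the definition of MUC guarantees that $T'$ is satisfiable, and then completeness of $\rho$ yields a model of $T'$ of length at most $h(\varphi) = k$. Therefore $T$ is a minimal subset of $\varphi$ failing to have a model of length $\le k$, i.e.\ a $k$-MUC.

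Combining both inclusions with Lemma~\ref{lemma:musiskmuc} gives the biconditional: $S$ is an MUS of $\rho(\varphi)$ w.r.t.\ $\mathcal{O}(\varphi)$ iff $\Formula(S) \in \MUC^k(\varphi) = \MUC(\varphi)$. The only mildly subtle step is the use of $h(\varphi)$ in the forward inclusion — one must invoke completeness on the candidate $T$ itself (not just on its proper subsets) to rule out the ``satisfiable but only by long traces'' escape clause of Lemma~\ref{lemma:musimpliesunsatorkbound}; everything else is bookkeeping on top of the two preceding lemmas.
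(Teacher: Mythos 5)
Your proof is correct and rests on the same three ingredients as the paper's own argument: the MUS/$k$-MUC correspondence of Lemma~\ref{lemma:musiskmuc}, the completeness threshold to rule out the ``satisfiable only by traces longer than $k$'' escape, and the transfer of minimality via satisfiability of proper subsets. The packaging differs slightly: you factor the theorem as the set identity $\MUC^{h(\varphi)}(\varphi)=\MUC(\varphi)$ composed with the bijection of Lemma~\ref{lemma:musiskmuc}, whereas the paper invokes that lemma only in the forward direction and, for the converse, argues directly from the probe definition that $\ObjAtoms(\psi)$ is a MUS (each $\ObjAtoms(\psi^j)$ extends to an answer set while $\ObjAtoms(\psi)$ does not). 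Your version is arguably the cleaner decomposition, but be aware that it leans on the ``$\Formula(S)$ is a $k$-MUC implies $S$ is a MUS'' half of Lemma~\ref{lemma:musiskmuc}, which the paper states as a biconditional but only proves in one direction (via Lemma~\ref{lemma:musimpliesunsatorkbound}); if you want a self-contained argument, the missing half is exactly the step the paper's $(\leftarrow)$ direction carries out explicitly. A small point you handle correctly that the paper glosses over: you justify that \emph{every} proper subset of a MUC is satisfiable (not merely the subsets obtained by deleting one conjunct), which is what $k$-MUC minimality actually requires and which follows from monotonicity of conjunction.
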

\begin{proof} $(\rightarrow)$ Let $S$ be an MUS of $\rho$ with respect to $\mathcal{O}(\varphi)$. By Lemma~\ref{lemma:musiskmuc} $\Formula(S)$ is a $k$-MUC, thus it is either unsatisfiable (hence it is a MUC); or satisfiable with a satisficing trace with length greater than $k$ --- in this latter case, $\rho$ would not be a complete probe. Hence, $\Formula(S)$ must be a MUC for $\varphi$.
$(\leftarrow)$ Let $\psi$ be an MUC of $\varphi = \{\phi_1, \dots, \phi_n\}$. Without loss of generality, we can assume $\psi = \{\phi_1, \dots, \phi_m\}$. All its proper subsets $\psi^j$ --- which denotes the \LTLf formula obtained by removing from $\psi$ the $j$-th conjunct --- are satisfiable. Since $\rho$ is a complete probe, for each $\psi^j$ there exists an answer set of $\rho(\varphi)$ that extends $\ObjAtoms(\psi^j)$, but there exists no answer set that extends $\ObjAtoms(\psi)$. Since $\ObjAtoms(\psi^j) = \ObjAtoms(\psi) \setminus \{phi(j)\}$, this shows that $\ObjAtoms(\psi)$ is an MUS for $\rho(\varphi)$ wrt $\mathcal{O}(\varphi)$. 
\end{proof}
Theorem~\ref{thm:musequalskmuc} characterizes MUCs of $\varphi$ as MUSes of complete probes for $\varphi$. From the standard automata-based procedure for deciding satisfiability in \LTLf
\cite{DBLP:conf/ijcai/GiacomoV13,MaMP20} it follows that every satisfiable 
formula $\varphi$ has a model of length at most $2^n$ where $n$ is the number of 
subformulas of $\varphi$.
Indeed, completeness threshold is bounded above by the upper bound on model length of $\varphi$ (due to monotonicity of \LTLf wrt conjunction --- adding a conjunct can only \textit{increase} the length of the shortest model). However, in practice, it can be much smaller, as we will see in the experiments section.
\subsection{MUC enumeration by MUS enumeration}
Applying Theorem~\ref{thm:musequalskmuc} we can enumerate MUCs of $\varphi$ by enumerating MUSes of a complete probe for $\varphi$. In general, computing the completeness threshold for $\varphi$ is not feasible. However, by Lemma~\ref{lem:unsatkmucismuc}, we also know that \textit{some} $k$-MUCs, with $k \le h(\varphi)$ could also be MUCs. These results suggest two anytime algorithms that could be useful in the realm of \LTLf MUC enumeration: \textit{(i)} an algorithm (cfr. Algorithm~\ref{alg:algorthm_kapprox}) that computes all MUCs among the $k$-MUCs for a given $k$ and \textit{(ii)} an iterative deepening variant of Algorithm~\ref{alg:algorthm_kapprox} (cfr. Algorithm~\ref{alg:algorithm_deepen}) which expands the probe depth $k$ whenever a $k$-MUC reveals not to be unsatisfiable.
Algorithm~\ref{alg:algorthm_kapprox} and \ref{alg:algorithm_deepen} provide pseudo-code for such approaches. Both algorithms make use of the subroutines \ttt{probe}, \ttt{enumerate\_mus}, \ttt{to\_formula}, \ttt{check\_satisfiability}, that are explained next.
\begin{description}
\item[\texttt{probe}$(\varphi,k)$] builds the logic program from which we will extract $k$-MUCs. This is the counterpart of $\rho(k, \varphi)$.
\item[\texttt{enumerate\_mus}($P$)] invokes an ASP solver to extract MUSes of the probe $P$ wrt the objective atoms $\Phi$;
\item[\texttt{to\_formula}$(x)$] given an MUS $x$ of $P$, rebuilds the \LTLf formula $\Formula(x)$;
\item[\texttt{check\_satisfiability($\psi$)}] determines wheter an \LTLf formula is satisfiable or not; if $\psi$ is satisfiable, returns the length of a satisficing trace; otherwise, it returns 0;
\end{description}

We remark both algorithms are compatible with any ASP solver that implements MUS enumeration (that is, an implementation of the procedure \texttt{enumerate\_mus}) and (complete) \LTLf solvers that can \textit{(i)} provide a satisficing trace length for satisfiable formulae \textit{(ii)} prove unsatisfiability (that is, an implementation of the procedure \texttt{check\_satisfiability)}.

\begin{algorithm}[tb]
\caption{Enumerate unsatisfiable $k$-MUCs}
\label{alg:algorthm_kapprox}
\begin{algorithmic}[1]
\STATE \textbf{def} \texttt{enumerate\_k\_mucs}($\varphi$, $k$):
\STATE \hspace{1em} $mucs = []$
\STATE \hspace{1em} $P = \texttt{probe}(\varphi, k)$
\STATE \hspace{1em} \textbf{for} $x$ \textbf{in} \texttt{enumerate\_mus}(P):
\STATE \hspace{3em} $\psi = \texttt{to\_formula}(x)$
\STATE \hspace{3em} $k' = \texttt{check\_satisfiability}(\psi)$:
\STATE \hspace{3em} \textbf{if} $k' = 0$:
\STATE \hspace{4em} $mucs.\text{append}(\psi)$
\STATE \hspace{1em} \textbf{return} $mucs$

\end{algorithmic}
\end{algorithm}

\begin{algorithm}[tb]
\caption{Enumerate MUCs - Iterative Deepening}
\label{alg:algorithm_deepen}
\begin{algorithmic}[1]
\STATE \textbf{def} \texttt{enumerate\_mucs}($\varphi$):
\STATE \hspace{1em} $k = 1$
\STATE \hspace{1em} $complete = False$
\STATE \hspace{1em} $mucs = []$
\STATE \hspace{1em} \textbf{while} not $complete$:
\STATE \hspace{2em} $P = \texttt{probe}(\varphi, k)$
\STATE \hspace{2em} $complete = True$
\STATE \hspace{2em} \textbf{for} $x$ \textbf{in} \texttt{enumerate\_mus}(P):
\STATE \hspace{3em} $\psi = \texttt{to\_formula}(x)$
\STATE \hspace{3em} \textbf{if} $x \in mucs$:
\STATE \hspace{4em} $skip$
\STATE \hspace{3em} $k' = \texttt{check\_satisfiability}(\psi)$:
\STATE \hspace{3em} \textbf{if} $k' = 0$:
\STATE \hspace{4em} $mucs.\text{append}(\psi)$
\STATE \hspace{3em} \textbf{else}:
\STATE \hspace{4em} $k = k'$
\STATE \hspace{4em} $complete = False$
\STATE \hspace{4em} \textbf{break}

\STATE \hspace{1em} \textbf{return} $mucs$

\end{algorithmic}
\end{algorithm}

Algorithm~\ref{alg:algorthm_kapprox} is straightforward. We enumerate MUSes of a $k$-probe, which yields a sequence of $k$-MUCs. Each $k$-MUC is a \textit{candidate} MUC for $\varphi$, that can be \textit{certified} or \textit{disproved} by a call to an \LTLf satisfiability oracle. Following such a call, we discard \textit{false positives} candidate (that is, $k$-MUCs that are actually satisfiable) as we meet them. This approach does not enable to detect all MUCs, unless $k \ge h(\varphi)$. Conjuncts whose shortest model has length greater than $k$ will be discarded.

Algorithm~\ref{alg:algorithm_deepen} extends Algorithm~\ref{alg:algorthm_kapprox}. Whenever we encounter a false positive $k$-MUC $\psi$, this is a witness of the fact the current $k$ is below the completeness threshold for $\varphi$. Thus, we increase $k$ according to the length of the model $\pi$ that satisfies $\psi$. Since $h(\varphi)$ is finite, $k$ will eventually converge to $h(\varphi)$. At that point, all $k$-MUCs of the $h(\varphi)$-probe result in MUCs for $\varphi$.

\section{A concrete example of probe}\label{sec:theprobe}
Probes can be obtained with slight modifications from any ASP encoding to perform bounded satisfiability of \LTLf formulae. In this section, we show how to obtain a probe from the encoding proposed by~\cite{DBLP:conf/lpnmr/FiondaIR24}, which repurposes to ASP the SAT-based approach presented in~\cite{DBLP:journals/jair/FiondaG18}. This will also be the probe we use in the experimental section. 
In rest of the section, we will provide ASP encoding using the clingo input language, for further detail we refer the reader to~\cite{DBLP:journals/tplp/GebserKKS19}.

We start by a brief recap of the ASP approach to bounded satisfiability~\cite{DBLP:conf/lpnmr/FiondaIR24}, then show how the encoding can be seamlessy adapted into a probe.

\paragraph{Encoding formulae.} The starting point is to encode an \LTLf formula $\varphi$ into a set of facts. Each subformula of $\varphi$ is assigned an unique integer identifier. This identifier is used as a term in the predicates $until/3$, $release/3$, $negate/2$, $conjunction/2$, $disjunction/2$ and $atom/1$ to reify the syntax tree of $\varphi$ into a directed acyclic graph.

\begin{example} As an example, consider the formula $\varphi = (a \land \lnot b) \land (c \Until b)$ with two conjuncts is
encoded through the facts:

\begin{verbatim}
conjunction(0, 1).  conjunction(0, 2).
conjunction(1, 3).  conjunction(1, 4).
atom(3, a).  negate(4, 5).  atom(5, b).  
until(2, 6, 5).  atom(6, c).
root(0).
\end{verbatim}

Additionally, the atom $root(i)$ encodes that $i$ is the root node of the formula $\varphi$. Without loss of generality, we can assume that the root node is always identified by 0. We denote by $[\varphi]$ the set of facts that encode the formula $\varphi$. 
\end{example}

\paragraph{Encoding \LTLf semantics.} 
The semantics of \LTLf temporal operators can be encoded into a recursive Datalog program. The logic program $\Pi_\text{semantics}$ below, described more in-depth in~\cite{DBLP:conf/lpnmr/FiondaIR24}, adapts the SAT-based approach described in~\cite{DBLP:journals/jair/FiondaG18}. %

\begin{verbatim}
holds(T,X) :- trace(T,A), atom(X,A).
holds(T,X) :- holds(T+1,F), 
  next(X,F), time(T+1).

holds(T,X) :- until(X,LHS,RHS),
  holds(T,RHS).

holds(T,X) :- holds(T,LHS), holds(T+1,X), 
  until(X,LHS,RHS).

holds(T,X) :- conjunction(X,_), time(T), 
  holds(T,F): conjunction(X,F).

holds(T,X) :- negate(X,F), 
  not holds(T,F), time(T)
\end{verbatim}

The predicate $trace/2$ is used to encode a trace. In particular, an atom $trace(t,a)$ models that $a \in \pi(t)$. We denote by $[\pi]$ the set of facts $\{trace(t,a): a \in \pi(t), 0\leq t < \vert \pi\vert \}$. The logic program $\Pi_\text{semantics} \cup [\pi] \cup [\varphi]$ admits a unique stable model $M$, such that $holds(0,0) \in M$ if and only if $\pi\models\varphi$.

\paragraph{Encoding \LTLf bounded satisfiabilty.} The $\Pi_\text{semantics}$ logic program can be used to evaluate whether $\pi\models\varphi$, where $\pi$ and $\varphi$ are suitably encoded into facts. This is straightforwardly adapted into a bounded satisfiability encoding $\Pi_{satisfiability}$, by replacing the set of facts encoding a specific trace with the following choice rule to $\Pi_\text{semantics}$: 

\begin{verbatim}
time(0..k-1).
{ trace(T,A): atom(_,A) } :- time(T).

:- root(X), not holds(X,0).
\end{verbatim}

In a typical \emph{guess \& check} approach, the above choice rule generate the search space of possible satisficing traces for $\varphi$ --- replacing a set  of facts $trace/2$ that encode a specific trace $\pi$. 

The constraint discards trace that are not models of $\varphi$. Thus, answer sets of $\Pi_{satisfiability} \cup [\varphi]$ are in one-to-one correspondance with traces of length at most $k$ that are models of $\varphi$. Note that $k$ is an input constant to the ASP grounder.

A more detailed account about the relationship between the original SAT encoding~\cite{DBLP:journals/jair/FiondaG18} and the ASP encoding is available in \cite{DBLP:conf/lpnmr/FiondaIR24}.

\paragraph{The probe.} The above program encodes whether $\varphi$ admits a model of length up to $k$. In order to comply with definition of probe (i.e. Definition~\ref{def:probe}), we require that there exists $\Pi_{satisfiability}$ admits an answer set \emph{for each subset of $\varphi$ that admits a model of length up to $k$}. This is obtained by replacing each fact of the form $conjunction(0,id) \in [\varphi]$, where $id$ is an identifier of a most immediate subformula of $\varphi$, with a rule of the form $conjunction(0,id) \leftarrow phi(id)$, as well as the choice rule $\{phi(id)\}\leftarrow$.

\begin{verbatim}
conjunction(1, 3).  conjunction(1, 4).
atom(3, a).  negate(4, 5).  atom(5, b).  
until(2, 6, 5).  atom(6, c).
root(0).

conjunction(0,1) :- phi(0). {phi(0)}.
conjunction(0,2) :- phi(1). {phi(1)}.
\end{verbatim}

As we can see, the only affected rules are the \texttt{conjunction} facts at the root level. Intuitively, the additional choice rule over $phi/1$ atoms \textit{enables} or \textit{disables} conjuncts of $\varphi$. This is reminiscent of how logic programs under stable model semantics are \textit{annotated} to exploit MUS enumeration for debugging purposes or to compute paraconsistent semantics~\cite{DBLP:journals/ai/AlvianoDFPR23}.

Finally, note that this is only an example, and probes could be obtained from different ASP encodings.
For example, a probe $P$ could encode the tableaux for 
\LTLf~\cite{DBLP:journals/corr/Reynolds16a,DBLP:journals/jar/GeattiGMV24}, or ad-hoc encodings for syntactical fragments of \LTLf such as Declare~\cite{DBLP:conf/padl/ChiarielloFIR24}.

\section{Experiments}
This section presents an experiment conducted to empirically evaluate the performance of our system \texttt{mus2muc}. 
We performed different experiments addressing the following issues:
\begin{itemize}
    \item [I] \textbf{Extraction of Single MUC}: \textit{How does \texttt{mus2muc} perform in computing a single MUC?}
    \item [II] \textbf{Enumeration of MUCs}: \textit{How effective is \texttt{mus2muc} in enumerating \LTLf MUCs?} 
    \item [III] \textbf{Generation vs. Certification}: \textit{How does MUSes generation and \LTLf satisfiability checks affect the overall performance of \texttt{mus2muc}?}
    \item [IV] \textbf{Domain agnostic MUCs enumeration techniques}: \textit{How does \texttt{mus2muc} compare with SAT-based MUCs enumeration techiques, suitably adapted from LTL to \LTLf domain?}
\end{itemize} 

In what follows, we describe the implementation of our system, and then shift the attention to an analysis aimed at answering the above questions.
\newcommand{\aaltafmuc}[2]{\texttt{aaltaf$\_$muc.#1#2}\xspace}
\newcommand{\aaltafuc}{\texttt{aaltaf$\_$uc}\xspace}
\newcommand{\musTomuc}{\texttt{mus2muc}\xspace}

\begin{figure*}[!t]
    \centering
    \begin{subfigure}{.49\textwidth}
        \includegraphics[width=\textwidth]{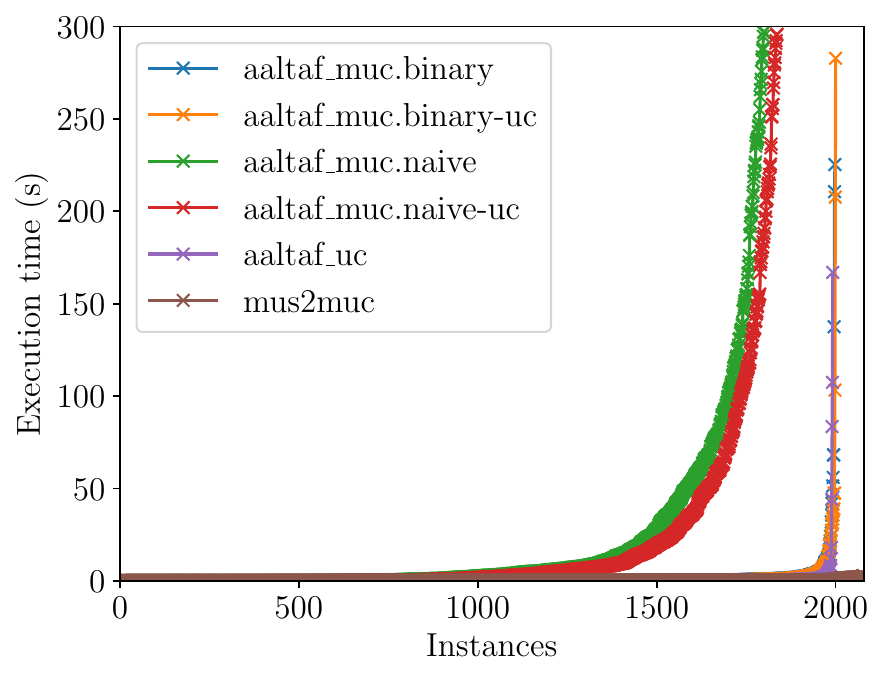}        
        \caption{All instances.}
        \label{fig:single-muc-full}
    \end{subfigure}
    \hfill 
    \begin{subfigure}{.49\textwidth}
        \includegraphics[width=\textwidth]{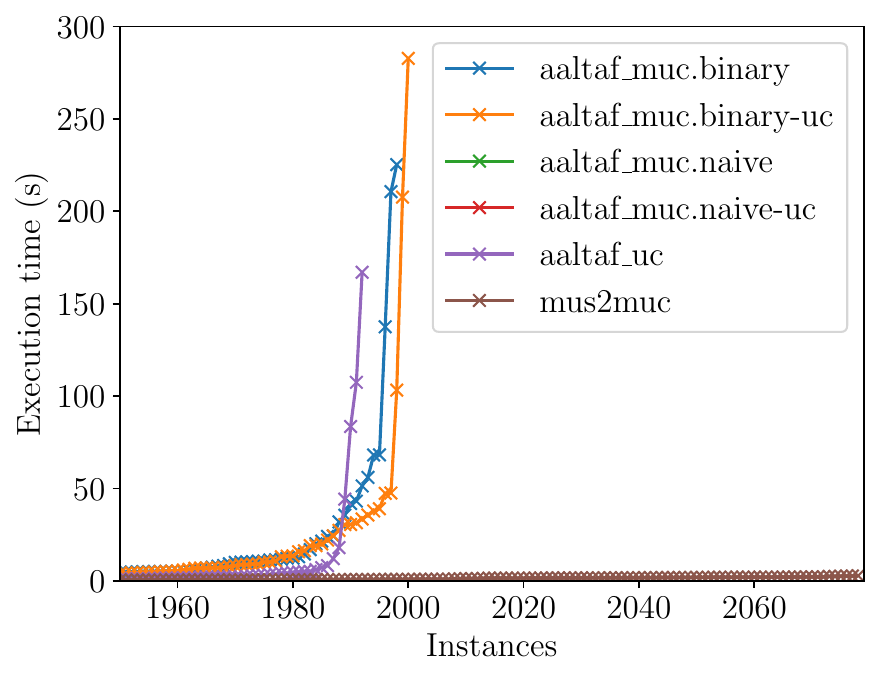}            
        \caption{Focus on the hardest instances (i.e., $x>1950$).}
        \label{fig:single-muc-cropped}
    \end{subfigure}
    \caption{Computation of a single MUC or UC.}
    \label{fig:single-mus-cactus}
\end{figure*}
\begin{table*}[!t]
\centering
    \resizebox{\textwidth}{!}{
    \begin{tabular}{@{}lrrrrrrrrrrrrrrr@{}}
    \multirow{2}{*}{\textbf{Benchmark}} && \multirow{2}{*}{\textbf{\#Inst.}} && \multirow{2}{*}{\textbf{\#Compl.}} && \multicolumn{2}{c}{\textbf{Sum of MUCs}} && \multicolumn{3}{c}{\textbf{Probe depth}} && \multicolumn{3}{c}{\textbf{MUC Size}}\\
    \cmidrule{7-8}\cmidrule{10-12}\cmidrule{14-16}
     & & & & & & \multicolumn{1}{c}{\textbf{Compl.}} & \multicolumn{1}{c}{\textbf{TO}} &&  {\textbf{Min.}} & {\textbf{Med.}} & {\textbf{Max.}} && {\textbf{Min.}} & {\textbf{Med.}} & {\textbf{Max.}}\\

    \midrule
    acacia.demo-v3                  && 11   && 11   && 77       & -         && 1     & 1   & 1   && 2  & 2   & 2       \\
    alaska.lift                     && 129  && 129  && 8310     & -         && 1     & 1   & 5   && 1  & 3   & 5       \\
    forobots.forobots               && 38   && 38   && 38       & -         && 1     & 1   & 2   && 2  & 2   & 2       \\
    schuppan.O1formula              && 27   && 27   && 27       & -         && 2     & 2   & 2   && 2  & 2   & 2       \\
    schuppan.O2formula              && 27   && 27   && 27       & -         && 1     & 1   & 1   && 2  & 60  & 1000    \\
    trp.N12x                        && 400  && 400  && 14380    & -         && 1     & 1   & 1   && 1  & 1   & 1       \\
    trp.N5x                         && 240  && 240  && 4210     & -         && 1     & 1   & 1   && 1  & 1   & 1       \\
    \midrule
    anzu.amba                       && 34   && 2    && 362      & 20642     && 5     & 9   & 9   && 1  & 8   & 41      \\
    anzu.genbuf                     && 36   && 6    && 1043     & 16119     && 6     & 6   & 6   && 1  & 4   & 9       \\
    rozier.counter                  && 76   && 62   && 514      & 35        && 4     & 23  & 41  && 2  & 4   & 5       \\
    schuppan.phltl                  && 13   && 9    && 219      & 2760      && 1     & 1   & 1   && 2  & 3   & 3       \\
    \midrule
    trp.N12y                        && 67   && 3    && 2514     & 116365    && 14    & 14  & 14  && 24 & 34  & 42      \\
    trp.N5y                         && 46   && 33   && 121704   & 287380    && 7     & 7   & 7   && 13 & 16  & 20      \\
    LTLfRandomConjunction.C100      && 500  && 134  && 15636    & 1295793   && 1     & 6   & 14  && 2  & 9   & 33      \\ 
    LTLfRandomConjunction.V20       && 435  && 152  && 64049    & 1619065   && 1     & 6   & 14  && 2  & 11  & 26      \\
    \midrule
    \bottomrule
    \end{tabular} 
    } 
    \caption{Complete MUC enumeration of the different formula families. \#Inst is the number of instances for each family. A benchmark $x.y$ denotes that the set of formulae $y$ is a \textit{family} of benchmark $x$.}
    \label{table:mus-enum}
\end{table*}
\begin{figure*}[t]
    \centering
    \begin{subfigure}{.23\textwidth}
        \includegraphics[width=\textwidth]{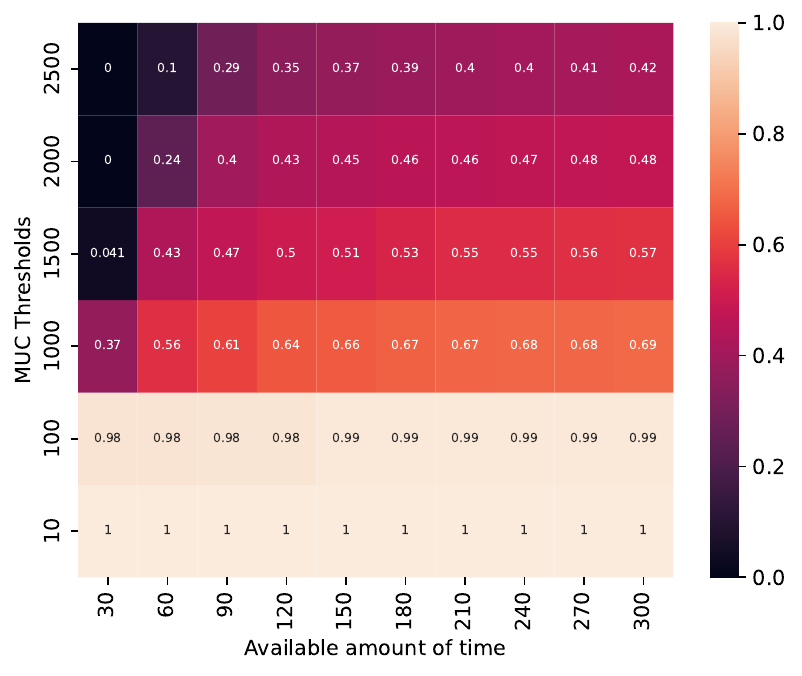}
        \caption{C100 RndConj class.\label{sub:heat_c100}}
    \end{subfigure}
    \begin{subfigure}{.23\textwidth}
        \includegraphics[width=\textwidth]{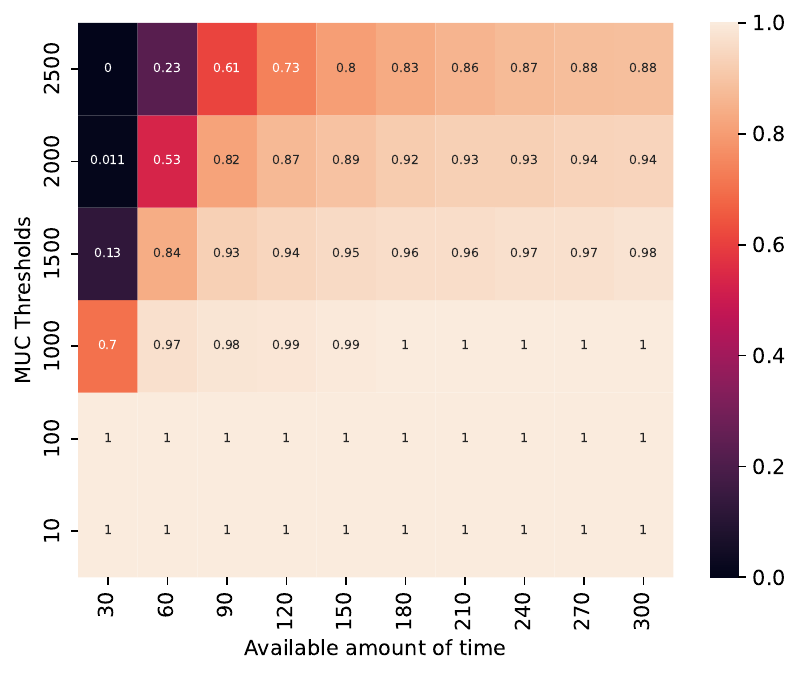}
        \caption{V20 RndConj class.\label{sub:heat_v20}}
    \end{subfigure}
    \begin{subfigure}{.23\textwidth}
        \includegraphics[width=\textwidth]{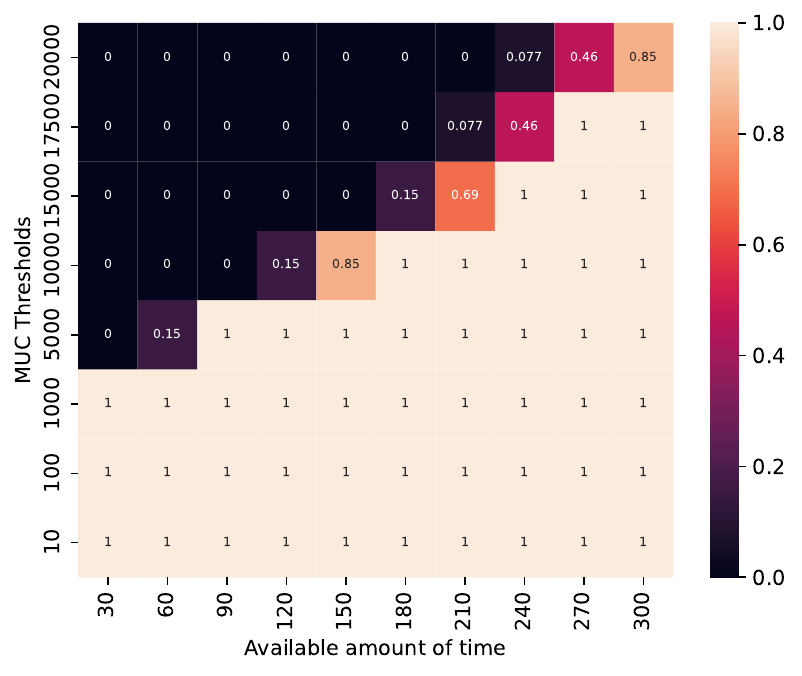}
        \caption{TRP5y class.\label{sub:heat_trp5}}
    \end{subfigure}
    \begin{subfigure}{.23\textwidth}
        \includegraphics[width=\textwidth]{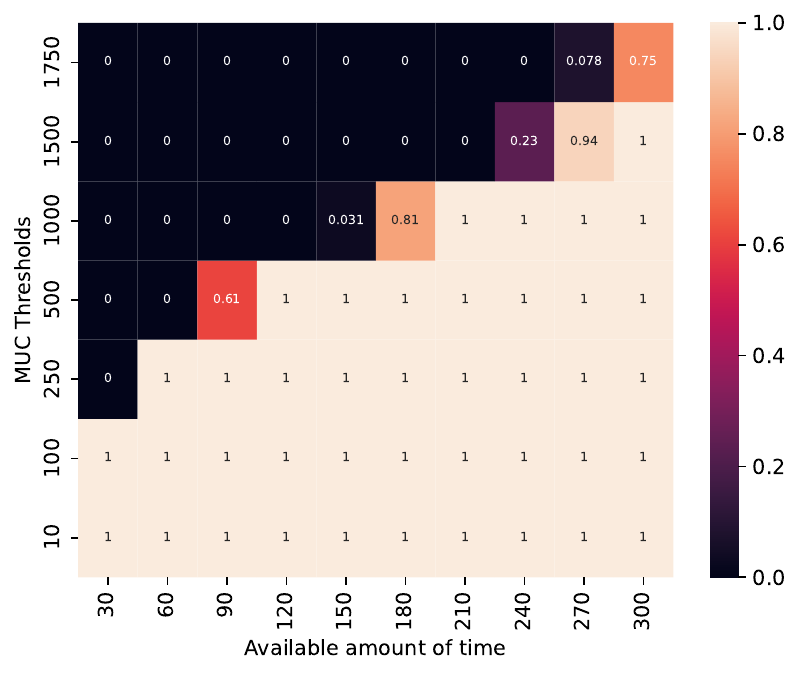}
        \caption{TRP12y class.\label{sub:heat_trp12}}
    \end{subfigure}
    \caption{Percentage of not-fully-enumerated instances (among the formula families RndConj C100, RndConj V20, TRP5y and TRP12y) that are able to enumerate at least a given number of MUCs in a certain amount of time. A cell $(i,j)$ in the heatmap represents the percentage of timed-out instances that are able to enumerate $j$ MUCs in $i$ seconds.}
    \label{fig:heatmaps_pct}
\end{figure*}

\begin{figure*}[t]
    \centering
    \begin{subfigure}{.23\textwidth}
        \includegraphics[width=\textwidth]{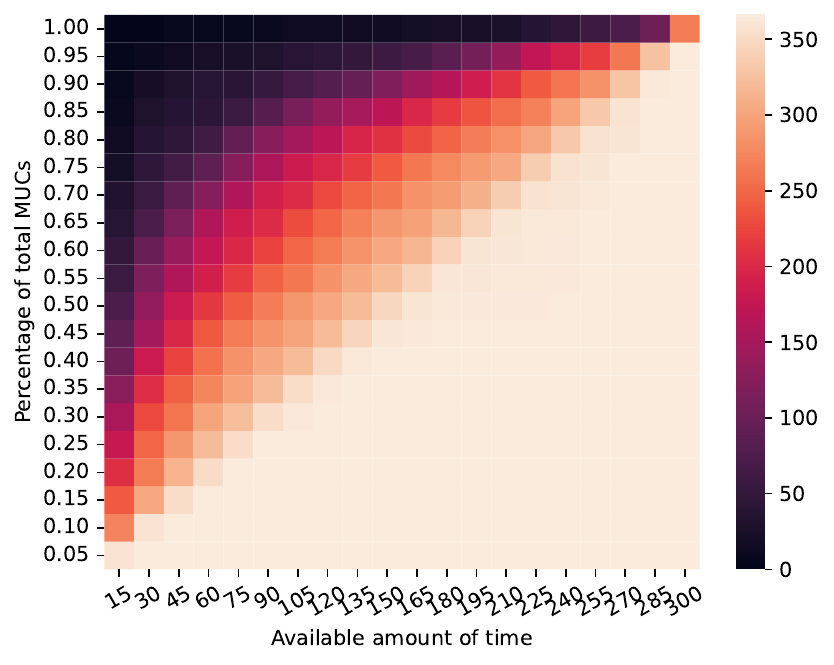}
        \caption{C100 RndConj class.\label{sub:heat_c100_pct}}
    \end{subfigure}
    \begin{subfigure}{.23\textwidth}
        \includegraphics[width=\textwidth]{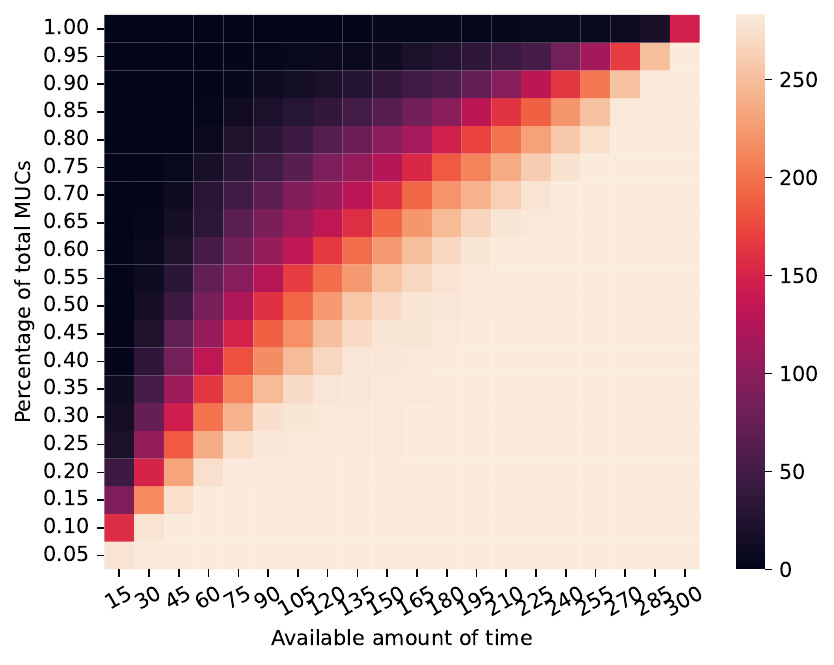}
        \caption{V20 RndConj class.\label{sub:heat_v20_pct}}
    \end{subfigure}
    \begin{subfigure}{.23\textwidth}
        \includegraphics[width=\textwidth]{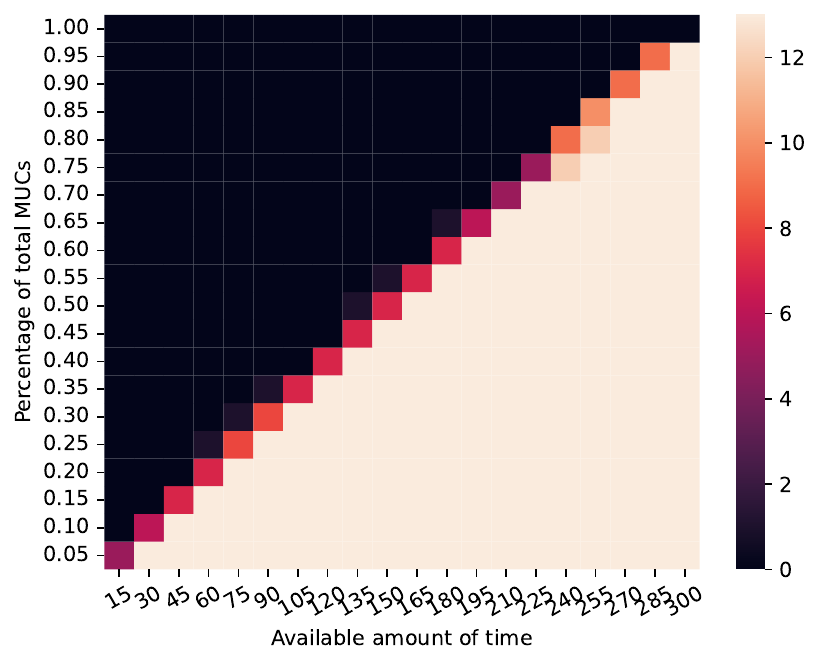}
        \caption{TRP5y class.\label{sub:heat_trp5_pct}}
    \end{subfigure}
    \begin{subfigure}{.23\textwidth}
        \includegraphics[width=\textwidth]{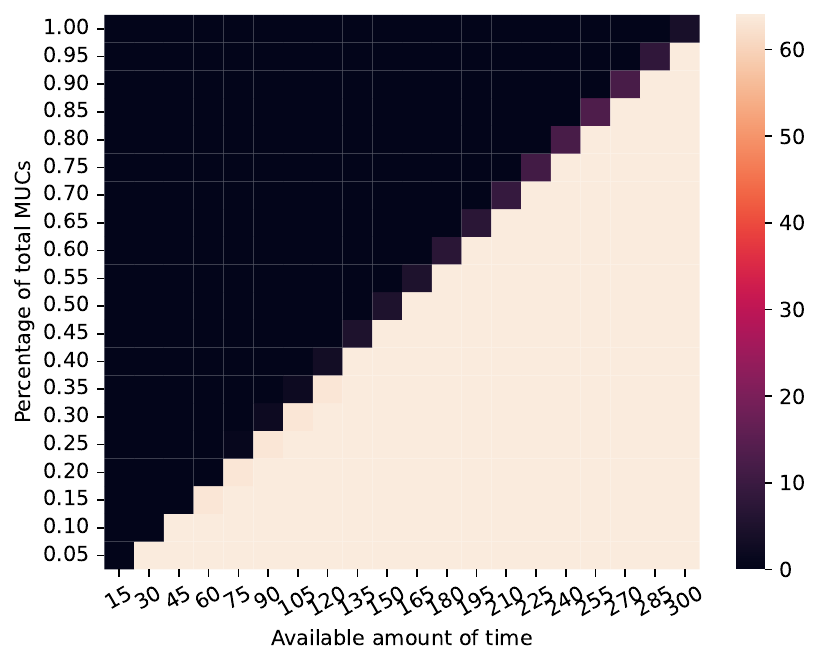}
        \caption{TRP12y class.\label{sub:heat_trp12_pct}}
    \end{subfigure}
    \caption{Number of not-fully-enumerated instances (among the formula families RndConj C100, RndConj V20, TRP5y and TRP12y) that are able to enumerate the $y$ percent of found MUCs within $x$ seconds of runtime. Thus, a cell $(i,j)$ in the heatmap represents how many instances in the given formula family can enumerate $i$ percent of the found MUCs found in 300s (up to timeout) in $j$ seconds.}
    \label{fig:heatmaps_num_instances}
\end{figure*}

\begin{figure*}[t]
    \centering
    \begin{subfigure}{.45\textwidth}
        \includegraphics[width=\textwidth]{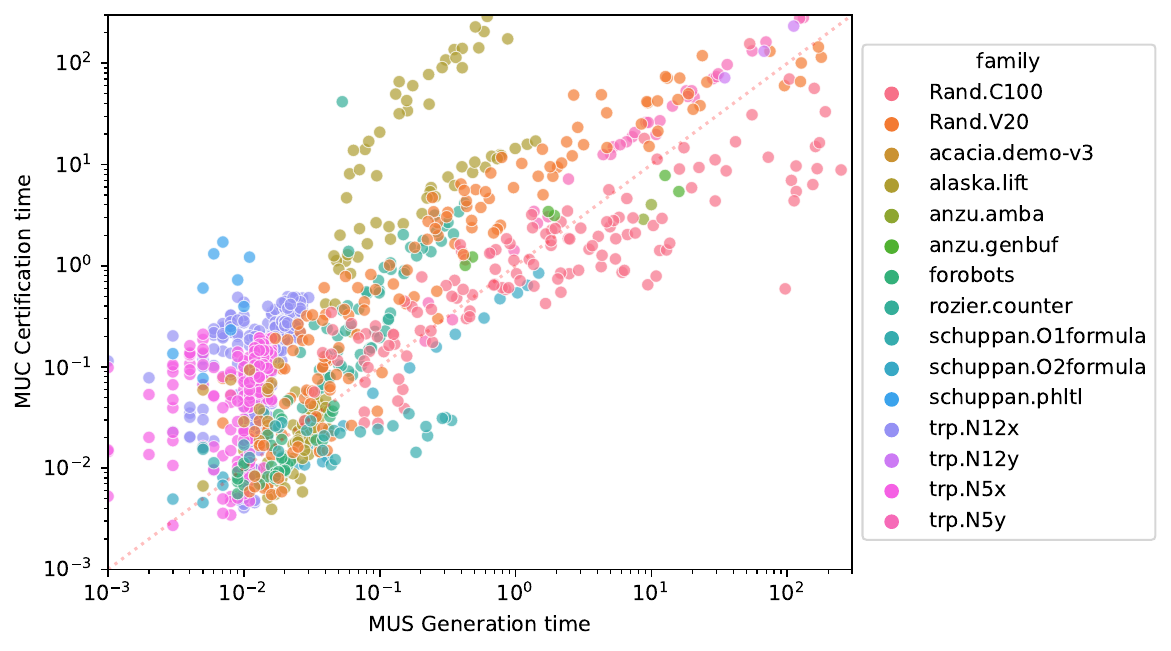}
        \caption{Fully-enumerated instances.\label{sub:time_gen_vs_time_cert_complete}}
    \end{subfigure}
    \begin{subfigure}{.45\textwidth}
        \includegraphics[width=\textwidth]{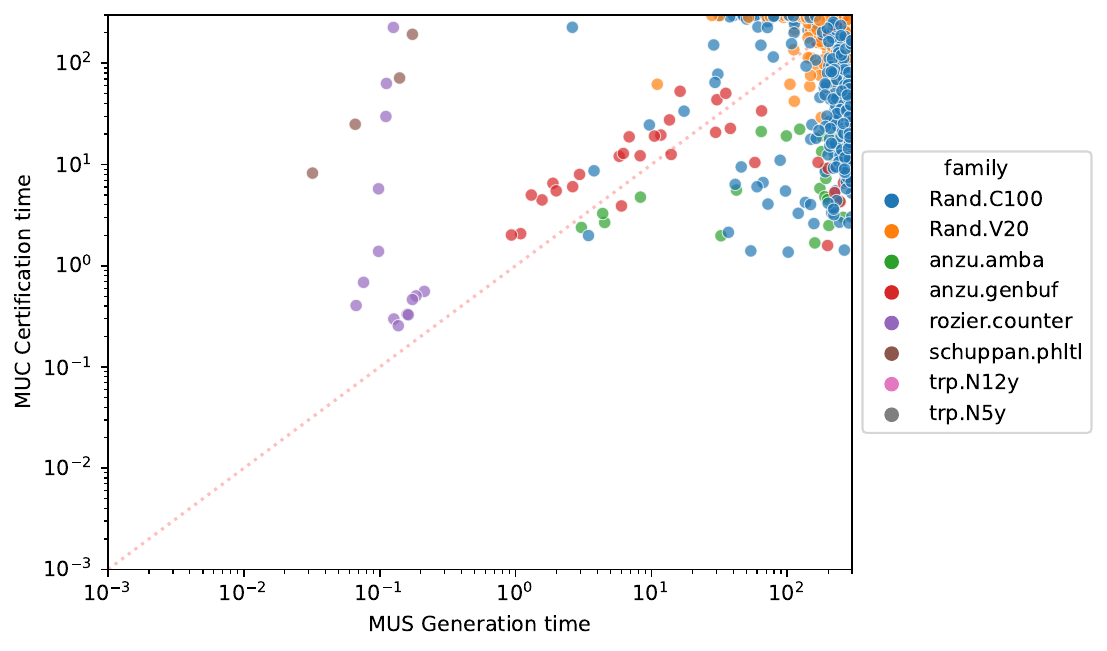}
        \caption{Incomplete instances.\label{sub:time_gen_vs_time_cert_incomplete}}
    \end{subfigure}
    \caption{A point $(x,y)$ in the scatter plot represents that a certain instance has spent $x$ CPU seconds generating $k$-MUCs (e.g., ASP MUS enumeration) and $y$ CPU seconds certifieing $k$-MUCs (e.g., an \LTLf solver running to prove unsatisfiability). Colors represent the formula family an instance belongs to. For a given point, $x + y$ might not sum up to the 300s timeout, because \textit{(i)} these modules run concurrently, thus CPU time can exceed 300s; \textit{(ii)} if the ASP solver or the \LTLf solver times out before yielding control, no event is recorded.}
    \label{fig:scatters}
\end{figure*}

\begin{figure*}[t]
    \centering
    \begin{subfigure}{.49\textwidth}
        \includegraphics[width=\textwidth]{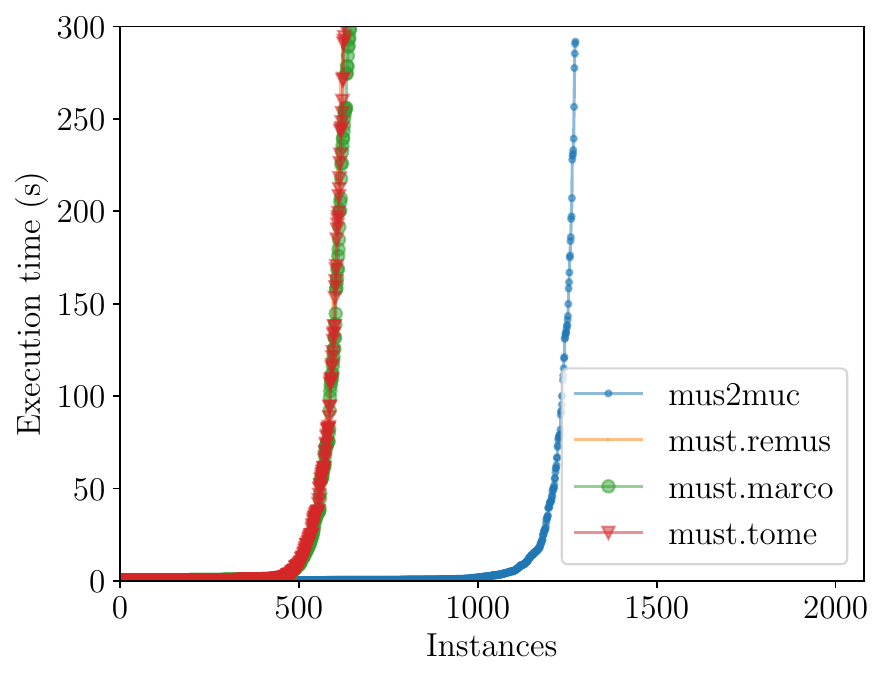}        
        \caption{Number of fully-enumerated instances.}
        \label{fig:enum-mucs-must-cactus}
    \end{subfigure}
    \hfill 
    \begin{subfigure}{.49\textwidth}
        \includegraphics[width=\textwidth]{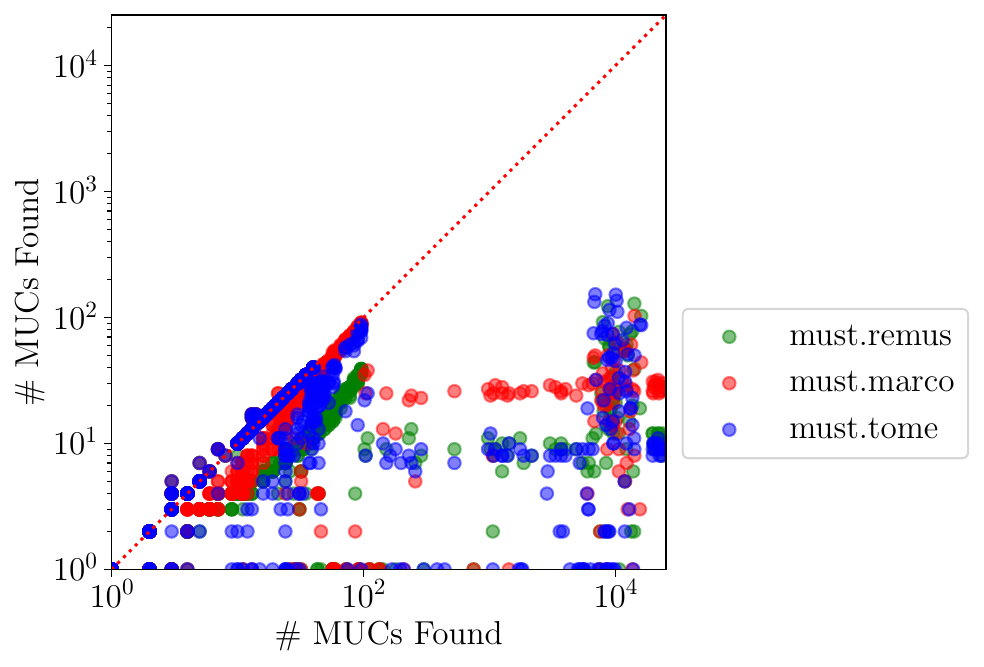}
        \caption{Number of MUCs enumerated within timeout for each instance.}
        \label{fig:enum-mucs-must-scatter}
    \end{subfigure}
    \caption{Enumeration of MUCs.}
    \label{fig:enum-mus-plots}
\end{figure*}
\paragraph{Implementation.} The implementation of \texttt{mus2muc} closely follows the pseudo-code in Algorithm~\ref{alg:algorithm_deepen}. In particular, our implementation uses the ASP solver \texttt{wasp} as a MUS Generator, and the \LTLf solver \texttt{aaltaf} as a satisfiability solver.
More in detail, the solver \texttt{wasp} takes as input the probe described in the previous section, and then performs the MUS enumeration.
As soon as a candidate $k$-MUC (i.e., a MUS of the probe) becomes available an instance of the \LTLf solver is invoked as a certifier, in a typical producer-consumer architecture. 
Furthermore, since multiple $k$-probes (for increasing value of $k$) are used, it is possible for $k$-MUCs to be produced multiple times (for different values of $k$). To avoid redundant calls to the \LTLf solver, we adopt a caching strategy on the MUS generator side. As stated in the previous section, our system is anytime, and outputs MUCs as soon as they are certified at the smallest $k$ that allows to do so.
Our implementation\footnote{Code will be made available upon request to the authors.} uses Python 3.12, the version of \texttt{aaltaf}\footnote{https://github.com/lijwen2748/aaltaf; \texttt{858885b}} and \texttt{wasp}\footnote{https://github.com/alviano/wasp; \texttt{f3e4c56}. Logging facilities for \texttt{mus2muc} require a patch available in our repository.} available on authors' public repositories.%

\paragraph{Systems.} For the single MUC extraction task, we compare with the \texttt{aaltaf-muc}\footnote{https://github.com/nuutong/aaltaf-muc; \texttt{
9b40837}} system, which computes a single minimal unsatisfiable core, in four configurations as described in~\cite{10.1093/logcom/exad049}. We also include \texttt{aaltaf-uc}\footnote{https://github.com/roveri-marco/aaltaf-uc; \texttt{b6aeb5c}}, which computes a single unsatisfiable core (with no minimality guarantees), and \texttt{black}\footnote{https://github.com/black-sat/black; \texttt{35cb36f}}, which implements a linear elimination strategy to extract a minimal unsatisfiable core. An in-depth comparison between \texttt{aaltaf-uc} and \texttt{aaltaf-muc} is available in~\cite{10.1093/logcom/exad049}.
For the MUCs enumeration task we consider a general purpose tool \texttt{must}~\cite{DBLP:conf/lpar/BendikC18}\footnote{https://github.com/jar-ben/mustool; \texttt{17fa9f9}}, which supports three LTL MUC enumeration algorithms (namely, \texttt{ReMUS}, \texttt{MARCO} and \texttt{TOME}). 
Note that, since \texttt{must} supports the LTL domain but not the \LTLf domain, we patch \texttt{must} by applying the well-known LTL-to-\LTLf transformation presented in \cite{DBLP:conf/ijcai/GiacomoV13} before evaluating \LTLf constraints within the MUC enumeration procedure.
For further details, we refer the reader to~\cite{DBLP:conf/lpar/BendikC18}. 

\paragraph{Benchmarks.}
In our experiments we consider a benchmark suite consisting of common formulae families used in LTL and \LTLf literature to evaluate solvers. In particular, we use all unsatisfiable formulae that appear in ~\cite{schuppan}, and randomly generated formulae from~\cite{vardi}.
These formulae have been previously used by \cite{10.1093/logcom/exad049} to benchmark single MUC computation, and by ~\cite{DBLP:journals/jair/RoveriCFG24} for single UC (with no minimality guarantee) extraction.
This benchmark suite contains unsatisfiable instances from 15 different applications domains, each with different formula shapes and feature. In particular, they comprise both instances from applications (13 domains) and randombly generated (2 domains).
In total, the suite contains 2079 unsatisfiable instances, that can be obtained from~\cite{schuppan}.
All instances were mapped in conjunctive form by recursively traversing the formula parse tree in a top-down fashion, stopping whenever formulae are not conjunctions. This is consistent with how these instances have been handled by ~\cite{10.1093/logcom/exad049,DBLP:journals/jair/RoveriCFG24}. All formulae are interpreted as \LTLf formulae.

\paragraph{Execution environment.}
The experiments were run on a system with 2.30GHz Intel(R) Xeon(R) Gold 5118 CPU and 512GB of RAM with Ubuntu 20.04.2 LTS (GNU/Linux 5.4.0-137-generic x86\_64).
For all experiments and systems, over each instance in the benchmark, memory and time were limited to 8GB and 300s of real time, 700s of CPU time respectively.

\paragraph{Extraction of a single MUC.}

First of all we assess the performance of our implementation in the computation of a single MUC. \cite{10.1093/logcom/exad049} provides two SAT-based approaches for single MUC extraction, called \emph{NaiveMUC} and \emph{BinaryMUC}, as well as two heuristic variants called \emph{NaiveMUC+UC} and \emph{BinaryMUC+UC} which augment the approach with techniques used in boolean unsatisfiable cores extraction. We refer the reader to \cite{10.1093/logcom/exad049} for an in-depth analysis of the techniques.

A related subtask is that of single unsatisfiable core extraction (UC), that is a set of unsatisfiable formulae with no subset-minimality guarantee. Algorithms for single UC extraction have been recently surveyed in \cite{DBLP:journals/jair/RoveriCFG24}, and \cite{10.1093/logcom/exad049} features a comparison between single MUC extraction introduced in \cite{10.1093/logcom/exad049} and techniques surveyed in \cite{DBLP:journals/jair/RoveriCFG24}.

In this experiment, we consider all algorithms for single MUC extraction featured in \cite{10.1093/logcom/exad049} (namely, \texttt{aaltaf-muc.binary}, \texttt{aaltaf-muc.naive}, \texttt{aaltaf-muc.binary-uc}, \texttt{aaltaf-muc.naive-uc} and the best-performing algorithm for single UC extraction features in \cite{DBLP:journals/jair/RoveriCFG24} (namely, \texttt{aaltaf-uc}). We compare with our system \texttt{mus2muc} configured to stop at the first MUC extracted from each \LTLf formula.

The cactus plot in Figure~\ref{fig:single-muc-full} reports the performance of different systems in this task. Overall, we can observe that most of the formulae are trivial for all systems, resulting in sub-second runtimes. Figure ~\ref{fig:single-muc-cropped} ``zooms-in'' to the hardest instances, were we observe that the \texttt{aaltaf-muc.binary} and \texttt{aaltaf-muc.binary-uc} are faster than \texttt{aaltaf-uc}, although the task solved by \texttt{aaltaf-uc} is easier (since it does not provide minimality guarantees on the UC). These results match the experimental results in \cite{10.1093/logcom/exad049}. Overall, \texttt{mus2muc} outperforms all systems in this task.

\paragraph{Enumeration of MUCs.}
Our second experiment consists in evaluating \texttt{mus2muc} effectiveness in \textit{enumerating} MUCs of the formulae in the benchmark suite. 
Table~\ref{table:mus-enum} reports statistics about the number of found MUCs, probe depth and size of MUCs (i.e., number of conjuncts). 

In general, different formula families exhibit heterogeneous behavior, ranging from \textit{easy} (e.g., fully enumerated within seconds) to \textit{hard} --- yielding a number of MUCs in the order of thousands per instance, that cannot be fully enumerated within the timeout. 
In particular, some of the \textit{easy} families can be fully-enumerated with a probe depth that does not exceed one. 
Essentially, all inconsistencies can be detected at a propositional level, involving no temporal reasoning.

For the remaining formula families, we study \textit{how fast} MUCs are extracted using \texttt{mus2muc}. The heatmaps in figures~\ref{sub:heat_c100}-~\ref{sub:heat_trp12} report, for distinct families, in a cell $(x,y)$ the percentage of instances for which \texttt{mus2muc} can produce at least $y$ MUCs in at most $x$ seconds. Even on these formulae, our approach is able to output a considerable amount of MUCs in short time, albeit not able to fully enumerate them within the timeout. 
Conversely, the heatmaps in figures~\ref{sub:heat_c100_pct}-~\ref{sub:heat_trp12_pct} report how MUCs are ``temporally distributed'' within the timeout. For distinct families, a cell $(x, y)$ contains the number of instances where it is possible to find $y$ percent of found MUCs (i.e., enumerated within timeout) within $x$ seconds. We can see that for all these families, in the majority of instances a MUCs are computed in a steady fashion and MUCs become available within seconds of runtime. 
Instances in these families are characterized by a huge number of MUCs that cannot be realistically inspected. However, even if in this scenario, our approach can provide a reasonable number of MUCs within few seconds.  

\paragraph{Generation vs. Certification.} In the \texttt{mus2muc} system, following Algorithm~\ref{alg:algorithm_deepen}, each (unique) MUS extracted from the probe is checked for satisfiability by an \LTLf solver, to be either \textit{certified} (e.g., found unsatisfiable) or \textit{disproved} (e.g., there exists a satisficing trace whose length exceeds the current probe depth). In our implementation, MUS search and MUS certification run concurrently rather than in an interleaved fashion. Given the modularity of our approach, it is interesting to study which component affects runtimes the most. To this end, we consider formula families that are not fully enumerated within timeout, but behave differently from the ones considered in the previous experiment. 

In particular, when performing MUC extraction over an instance $F$, a certain amount of seconds due to MUS generation and MUS certification are accrued. Scatter plots in Figure~\ref{fig:scatters} report each instance as a point $(x,y)$, where $x$ is the total CPU time spent running MUS generation procedures and $y$ is the total CPU time spent running MUS certification\footnote{Notice that, for a given instance, MUS generation runtimes and MUS certification runtimes do not necessarily sum up to the timeout since components run concurrently (i.e., CPU time could be greater than wall time). 
Furthermore, if a timeout signal is received \textit{while} a MUC is being certified, \texttt{aaltaf} can't output any timestamp. Same goes for \texttt{wasp} during MUS generation. This explains not-fully-enumerated instances below the upper-right corner of the scatter plot.}. Colors denote which family each data-point belongs to.

We can observe in Figure~\ref{sub:time_gen_vs_time_cert_incomplete} that MUS generation and MUC certification can both become bottlenecks in \texttt{mus2muc}, for unsatisfiable instances. Notably, some formula families such as $rozier.counter$ feature unsatisfiable instances for which \texttt{wasp} is able to provide MUSes in less than a second, but whose certification time exceeds the allowed runtime. Similarly, in the $C100$ random conjunction family features instances for which the cumulative certification time is one order of magnitude smaller than MUS generation time. This sort of trade-off can be better analyzed by considering only fully enumerated instances in Figure~\ref{sub:time_gen_vs_time_cert_complete}, where it is possible to observe heterogeneous behavior among families, ranging from families that are trivial from both standpoints (lower left corner); hard from both standpoints (upper right corner); easy MUS generation-wise, but hard MUC-certification wise (upper left corner). No fully-enumerated instances are easy certification-wise and hard generation-wise --- as we have a mostly empty lower right corner in scatter plot.

\paragraph{Domain-agnostic MUCs enumeration techniques.} As far as we know, no publicly available systems work out of the box to enumerate MUCs of \LTLf formulae. However, a number of \textit{general purpose}, \textit{domain-agnostic} MUC extraction algorithms (which also support LTL as a domain) are available~\cite{DBLP:conf/lpar/BendikC18}. 
The survey by \cite{DBLP:journals/jair/RoveriCFG24}, does not compare with algorithms proposed in \cite{DBLP:conf/lpar/BendikC18}.

Figure~\ref{fig:enum-mucs-must-cactus} compares the number of fully-enumerated instances among different \texttt{must} algorithms and \texttt{mus2muc}. \texttt{mus2muc} is more effective, and can fully-enumerate more or less 500 more instances than any \texttt{must} variant. All \texttt{must} variants perform roughly the same.
Figure~\ref{fig:enum-mucs-must-scatter} compares the number of found MUCs of each \texttt{must} variant wrt \texttt{mus2muc}. A point $(x,y)$ in Figure~\ref{fig:enum-mucs-must-scatter}, denotes that for a given instance in the benchmarks suite \texttt{mus2muc} has computed $x$ MUCs whereas one of the \texttt{must} algorithms has computed $y$ MUCs. Each color distinguish a specific \textit{must} algorithm.

We can see, from the cactus plot in Figure~\ref{fig:enum-mucs-must-cactus}, that different algorithms of \texttt{must} are able to fully-enumerate (roughly) the same number of instances, indeed corresponding lines are mostly overlapped.
Overall, \texttt{mus2muc} is able to enumerate more MUCs than any of the \texttt{must} variants --- in some extreme cases, enumerating several order of magnitude more MUCs (see the points that lie on the $x$-axis).

\section{Conclusions}
Satisfiability of temporal specifications expressed in \LTLf play an important role in several artificial intelligence application domains~\cite{DBLP:journals/amai/BacchusK98,DBLP:conf/kr/CalvaneseGV02,DBLP:conf/aips/GiacomoMMS16,DBLP:conf/ecp/GiacomoV99,DBLP:conf/aips/GiacomoIFP19}. Therefore, in case of unsatisfiable specifications, detecting reasons for unsatisfiability --- e.g., computing its minimal unsatisfiable cores --- is of particular interest. This is especially true whenever the specification under analysis \emph{is expected to be satisfiable}. 

Recent works~\cite{10.1093/logcom/exad049,DBLP:journals/jair/RoveriCFG24} propose several approaches for single MUC computation, but do not investigate enumeration techniques for MUCs. 

However, enumerating MUCs for \LTLf specifications is pivotal to enabling several reasoning services, such as some explainability tasks~\cite{DBLP:journals/ai/Miller19}, as it is the case for propositional logic~\cite{DBLP:conf/ismvl/Silva10,DBLP:journals/ai/Marques-SilvaJM17}.

In this paper, we propose an approach for characterizing MUCs of \LTLf formulae as minimal unsatisfiable subprograms (MUS) of suitable logic programs, introducing the notion of probe. 
This enables to implement \LTLf MUC enumeration techniques by exploiting off-the-shelf ASP and \LTLf reasoners, similarly to SAT-based domain agnostic MUC enumeration techniques à la~\cite{DBLP:conf/lpar/BendikC18}. 

The approach presented herein is modular with respect to ASP \& \LTLf reasoners, which essentially constitute two sub-modules of the system, and with respect to the logic program that is used to extract MUCs via its MUSes.

We implement this strategy in \texttt{mus2muc}, using the ASP solver \texttt{wasp} and the \LTLf solver \texttt{aaltaf}. Our experiments show \texttt{mus2muc} is effective at enumerating MUCs of unsatisfiable formulae that are commonly used in \LTLf literature as benchmarks, as well as being competitive with available state-of-the-art for single MUC computation.

To the best of our knowledge, this represent the first attempt to address this task in the \LTLf setting. 

As far as future works are concerned, we are interested in studying how the choice of probes affect MUCs computation in our setting, as well as providing ad-hoc implementations for closely related \LTLf tasks, such explaining and repairing incosistent Declare specification in the realm of process mining.

\bibliography{refs}

\begin{thebibliography}{43}
\providecommand{\natexlab}[1]{#1}

\bibitem[{Alviano et~al.(2023)Alviano, Dodaro, Fiorentino, Previti, and
  Ricca}]{DBLP:journals/ai/AlvianoDFPR23}
Alviano, M.; Dodaro, C.; Fiorentino, S.; Previti, A.; and Ricca, F. 2023.
\newblock {ASP} and subset minimality: Enumeration, cautious reasoning and
  MUSes.
\newblock \emph{Artif. Intell.}, 320: 103931.

\bibitem[{Audemard, Koriche, and Marquis(2020)}]{DBLP:conf/kr/AudemardKM20}
Audemard, G.; Koriche, F.; and Marquis, P. 2020.
\newblock On Tractable {XAI} Queries based on Compiled Representations.
\newblock In \emph{{KR}}, 838--849.

\bibitem[{Baader and Pe{\~{n}}aloza(2010)}]{BaPe-JAR10}
Baader, F.; and Pe{\~{n}}aloza, R. 2010.
\newblock Automata-Based Axiom Pinpointing.
\newblock \emph{J. Autom. Reason.}, 45(2): 91--129.

\bibitem[{Bacchus and Kabanza(1998)}]{DBLP:journals/amai/BacchusK98}
Bacchus, F.; and Kabanza, F. 1998.
\newblock Planning for Temporally Extended Goals.
\newblock \emph{Ann. Math. Artif. Intell.}, 22(1-2): 5--27.

\bibitem[{Bend{\'{\i}}k and Cerna(2018)}]{DBLP:conf/lpar/BendikC18}
Bend{\'{\i}}k, J.; and Cerna, I. 2018.
\newblock Evaluation of Domain Agnostic Approaches for Enumeration of Minimal
  Unsatisfiable Subsets.
\newblock In \emph{{LPAR}}, volume~57 of \emph{EPiC Series in Computing},
  131--142. EasyChair.

\bibitem[{Brewka, Eiter, and
  Truszczynski(2011)}]{DBLP:journals/cacm/BrewkaET11}
Brewka, G.; Eiter, T.; and Truszczynski, M. 2011.
\newblock Answer set programming at a glance.
\newblock \emph{Commun. {ACM}}, 54(12): 92--103.

\bibitem[{Brewka, Thimm, and Ulbricht(2019)}]{DBLP:journals/ai/BrewkaTU19}
Brewka, G.; Thimm, M.; and Ulbricht, M. 2019.
\newblock Strong inconsistency.
\newblock \emph{Artif. Intell.}, 267: 78--117.

\bibitem[{Calimeri et~al.(2020)Calimeri, Faber, Gebser, Ianni, Kaminski,
  Krennwallner, Leone, Maratea, Ricca, and
  Schaub}]{DBLP:journals/tplp/CalimeriFGIKKLM20}
Calimeri, F.; Faber, W.; Gebser, M.; Ianni, G.; Kaminski, R.; Krennwallner, T.;
  Leone, N.; Maratea, M.; Ricca, F.; and Schaub, T. 2020.
\newblock ASP-Core-2 Input Language Format.
\newblock \emph{Theory Pract. Log. Program.}, 20(2): 294--309.

\bibitem[{Calvanese, {De Giacomo}, and
  Vardi(2002)}]{DBLP:conf/kr/CalvaneseGV02}
Calvanese, D.; {De Giacomo}, G.; and Vardi, M.~Y. 2002.
\newblock Reasoning about Actions and Planning in {LTL} Action Theories.
\newblock In \emph{{KR}}, 593--602.

\bibitem[{Chiariello et~al.(2024)Chiariello, Fionda, Ielo, and
  Ricca}]{DBLP:conf/padl/ChiarielloFIR24}
Chiariello, F.; Fionda, V.; Ielo, A.; and Ricca, F. 2024.
\newblock A Direct {ASP} Encoding for Declare.
\newblock In \emph{{PADL}}, volume 14512 of \emph{Lecture Notes in Computer
  Science}, 116--133. Springer.

\bibitem[{{De Giacomo} et~al.(2019){De Giacomo}, Iocchi, Favorito, and
  Patrizi}]{DBLP:conf/aips/GiacomoIFP19}
{De Giacomo}, G.; Iocchi, L.; Favorito, M.; and Patrizi, F. 2019.
\newblock Foundations for Restraining Bolts: Reinforcement Learning with
  LTLf/LDLf Restraining Specifications.
\newblock In \emph{{ICAPS}}, 128--136. {AAAI} Press.

\bibitem[{{De Giacomo} et~al.(2016){De Giacomo}, Maggi, Marrella, and
  Sardi{\~{n}}a}]{DBLP:conf/aips/GiacomoMMS16}
{De Giacomo}, G.; Maggi, F.~M.; Marrella, A.; and Sardi{\~{n}}a, S. 2016.
\newblock Computing Trace Alignment against Declarative Process Models through
  Planning.
\newblock In \emph{{ICAPS}}, 367--375.

\bibitem[{{De Giacomo} and Vardi(1999)}]{DBLP:conf/ecp/GiacomoV99}
{De Giacomo}, G.; and Vardi, M.~Y. 1999.
\newblock Automata-Theoretic Approach to Planning for Temporally Extended
  Goals.
\newblock In \emph{{ECP}}, volume 1809 of \emph{LNCS}, 226--238.

\bibitem[{{De Giacomo} and Vardi(2013)}]{DBLP:conf/ijcai/GiacomoV13}
{De Giacomo}, G.; and Vardi, M.~Y. 2013.
\newblock Linear Temporal Logic and Linear Dynamic Logic on Finite Traces.
\newblock In \emph{{IJCAI}}, 854--860. {IJCAI/AAAI}.

\bibitem[{{Di Ciccio} and Montali(2022)}]{CiMo22}
{Di Ciccio}, C.; and Montali, M. 2022.
\newblock Declarative Process Specifications: Reasoning, Discovery, Monitoring.
\newblock In van~der Aalst, W. M.~P.; and Carmona, J., eds., \emph{Process
  Mining Handbook}, volume 448 of \emph{Lecture Notes in Business Information
  Processing}, 108--152. Springer.

\bibitem[{Fionda and Greco(2018)}]{DBLP:journals/jair/FiondaG18}
Fionda, V.; and Greco, G. 2018.
\newblock {LTL} on Finite and Process Traces: Complexity Results and a
  Practical Reasoner.
\newblock \emph{J. Artif. Intell. Res.}, 63: 557--623.

\bibitem[{Fionda, Ielo, and Ricca(2024)}]{DBLP:conf/lpnmr/FiondaIR24}
Fionda, V.; Ielo, A.; and Ricca, F. 2024.
\newblock ltlf2asp: {LTL}f Bounded Satisfiability in ASP.
\newblock In \emph{{LPNMR}}, volume (to appear) of \emph{Lecture Notes in
  Computer Science}, 0--0. Springer.

\bibitem[{Geatti et~al.(2024)Geatti, Gigante, Montanari, and
  Venturato}]{DBLP:journals/jar/GeattiGMV24}
Geatti, L.; Gigante, N.; Montanari, A.; and Venturato, G. 2024.
\newblock {SAT} Meets Tableaux for Linear Temporal Logic Satisfiability.
\newblock \emph{J. Autom. Reason.}, 68(2): 6.

\bibitem[{Gebser et~al.(2019)Gebser, Kaminski, Kaufmann, and
  Schaub}]{DBLP:journals/tplp/GebserKKS19}
Gebser, M.; Kaminski, R.; Kaufmann, B.; and Schaub, T. 2019.
\newblock Multi-shot {ASP} solving with clingo.
\newblock \emph{Theory Pract. Log. Program.}, 19(1): 27--82.

\bibitem[{Gelfond and Lifschitz(1991)}]{DBLP:journals/ngc/GelfondL91}
Gelfond, M.; and Lifschitz, V. 1991.
\newblock Classical Negation in Logic Programs and Disjunctive Databases.
\newblock \emph{New Gener. Comput.}, 9(3/4): 365--386.

\bibitem[{Li et~al.(2020{\natexlab{a}})Li, Pu, Zhang, Vardi, and
  Rozier}]{LiPZVR20}
Li, J.; Pu, G.; Zhang, Y.; Vardi, M.~Y.; and Rozier, K.~Y. 2020{\natexlab{a}}.
\newblock SAT-based explicit LTLf satisfiability checking.
\newblock \emph{Artif. Intell.}, 289: 103369.

\bibitem[{Li et~al.(2020{\natexlab{b}})Li, Pu, Zhang, Vardi, and
  Rozier}]{vardi}
Li, J.; Pu, G.; Zhang, Y.; Vardi, M.~Y.; and Rozier, K.~Y. 2020{\natexlab{b}}.
\newblock SAT-based explicit LTLf satisfiability checking.
\newblock \emph{Artif. Intell.}, 289: 103369.

\bibitem[{Liffiton et~al.(2016)Liffiton, Previti, Malik, and
  Marques{-}Silva}]{DBLP:journals/constraints/LiffitonPMM16}
Liffiton, M.~H.; Previti, A.; Malik, A.; and Marques{-}Silva, J. 2016.
\newblock Fast, flexible {MUS} enumeration.
\newblock \emph{Constraints An Int. J.}, 21(2): 223--250.

\bibitem[{Liffiton and Sakallah(2008)}]{LiSa08}
Liffiton, M.~H.; and Sakallah, K.~A. 2008.
\newblock Algorithms for Computing Minimal Unsatisfiable Subsets of
  Constraints.
\newblock \emph{J. Autom. Reason.}, 40(1): 1--33.

\bibitem[{Maggi, Montali, and Pe{\~{n}}aloza(2020)}]{MaMP20}
Maggi, F.~M.; Montali, M.; and Pe{\~{n}}aloza, R. 2020.
\newblock Temporal Logics Over Finite Traces with Uncertainty.
\newblock In \emph{The Thirty-Fourth {AAAI} Conference on Artificial
  Intelligence, {AAAI} 2020}, 10218--10225. {AAAI} Press.

\bibitem[{Marques{-}Silva(2010)}]{DBLP:conf/ismvl/Silva10}
Marques{-}Silva, J. 2010.
\newblock Minimal Unsatisfiability: Models, Algorithms and Applications
  (Invited Paper).
\newblock In \emph{{ISMVL}}, 9--14. {IEEE} Computer Society.

\bibitem[{Marques{-}Silva, Janota, and
  Menc{\'{\i}}a(2017)}]{DBLP:journals/ai/Marques-SilvaJM17}
Marques{-}Silva, J.; Janota, M.; and Menc{\'{\i}}a, C. 2017.
\newblock Minimal sets on propositional formulae. Problems and reductions.
\newblock \emph{Artif. Intell.}, 252: 22--50.

\bibitem[{Meliou, Roy, and Suciu(2014)}]{MeliouRS14}
Meliou, A.; Roy, S.; and Suciu, D. 2014.
\newblock Causality and Explanations in Databases.
\newblock \emph{Proc. {VLDB} Endow.}, 7(13): 1715--1716.

\bibitem[{Menc{\'{\i}}a and Marques{-}Silva(2014)}]{MenciaM14}
Menc{\'{\i}}a, C.; and Marques{-}Silva, J. 2014.
\newblock Efficient Relaxations of Over-constrained CSPs.
\newblock In \emph{Proceedings of 26th {IEEE} International Conference on Tools
  with Artificial Intelligence, {ICTAI} 2014}, 725--732. {IEEE} Computer
  Society.

\bibitem[{Menc{\'{\i}}a and Marques{-}Silva(2020)}]{DBLP:conf/sat/Mencia020}
Menc{\'{\i}}a, C.; and Marques{-}Silva, J. 2020.
\newblock Reasoning About Strong Inconsistency in {ASP}.
\newblock In \emph{{SAT}}, volume 12178 of \emph{Lecture Notes in Computer
  Science}, 332--342. Springer.

\bibitem[{Miller(2019)}]{DBLP:journals/ai/Miller19}
Miller, T. 2019.
\newblock Explanation in artificial intelligence: Insights from the social
  sciences.
\newblock \emph{Artif. Intell.}, 267: 1--38.

\bibitem[{Morgado et~al.(2013)Morgado, Heras, Liffiton, Planes, and
  Marques{-}Silva}]{DBLP:journals/constraints/MorgadoHLPM13}
Morgado, A.; Heras, F.; Liffiton, M.~H.; Planes, J.; and Marques{-}Silva, J.
  2013.
\newblock Iterative and core-guided MaxSAT solving: {A} survey and assessment.
\newblock \emph{Constraints An Int. J.}, 18(4): 478--534.

\bibitem[{Niu et~al.(2023)Niu, Xiao, Zhang, Li, Huang, and
  Shi}]{10.1093/logcom/exad049}
Niu, T.; Xiao, S.; Zhang, X.; Li, J.; Huang, Y.; and Shi, J. 2023.
\newblock {Computing minimal unsatisfiable core for LTL over finite traces}.
\newblock \emph{Journal of Logic and Computation}, exad049.

\bibitem[{Pe{\~{n}}aloza(2019)}]{Pena19}
Pe{\~{n}}aloza, R. 2019.
\newblock Explaining Axiom Pinpointing.
\newblock In Lutz, C.; Sattler, U.; Tinelli, C.; Turhan, A.; and Wolter, F.,
  eds., \emph{Description Logic, Theory Combination, and All That - Essays
  Dedicated to Franz Baader on the Occasion of His 60th Birthday}, volume 11560
  of \emph{Lecture Notes in Computer Science}, 475--496. Springer.

\bibitem[{Pe{\~{n}}aloza(2020)}]{Pena20}
Pe{\~{n}}aloza, R. 2020.
\newblock Axiom Pinpointing.
\newblock In Cota, G.; Daquino, M.; and Pozzato, G.~L., eds.,
  \emph{Applications and Practices in Ontology Design, Extraction, and
  Reasoning}, volume~49 of \emph{Studies on the Semantic Web}, 162--177. {IOS}
  Press.

\bibitem[{Pe{\~{n}}aloza and Ricca(2022)}]{PeRi22}
Pe{\~{n}}aloza, R.; and Ricca, F. 2022.
\newblock Pinpointing Axioms in Ontologies via {ASP}.
\newblock In Gottlob, G.; Inclezan, D.; and Maratea, M., eds.,
  \emph{Proceedings of {LPNMR} 2022}, volume 13416 of \emph{Lecture Notes in
  Computer Science}, 315--321. Springer.

\bibitem[{Pesic, Schonenberg, and van~der Aalst(2007)}]{PeSV07}
Pesic, M.; Schonenberg, H.; and van~der Aalst, W. M.~P. 2007.
\newblock DECLARE: Full Support for Loosely-Structured Processes.
\newblock In \emph{Proceedings of EDOC 2007}, 287--300. IEEE Computer Society.

\bibitem[{Pnueli(1977)}]{DBLP:conf/focs/Pnueli77}
Pnueli, A. 1977.
\newblock The Temporal Logic of Programs.
\newblock In \emph{{FOCS}}, 46--57. {IEEE} Computer Society.

\bibitem[{Reynolds(2016)}]{DBLP:journals/corr/Reynolds16a}
Reynolds, M. 2016.
\newblock A New Rule for {LTL} Tableaux.
\newblock In \emph{GandALF}, volume 226 of \emph{{EPTCS}}, 287--301.

\bibitem[{Roveri et~al.(2024)Roveri, Ciccio, Francescomarino, and
  Ghidini}]{DBLP:journals/jair/RoveriCFG24}
Roveri, M.; Ciccio, C.~D.; Francescomarino, C.~D.; and Ghidini, C. 2024.
\newblock Computing Unsatisfiable Cores for LTLf Specifications.
\newblock \emph{J. Artif. Intell. Res.}, 80: 517--558.

\bibitem[{Schlobach and Cornet(2003)}]{SchlobachC03}
Schlobach, S.; and Cornet, R. 2003.
\newblock Non-Standard Reasoning Services for the Debugging of Description
  Logic Terminologies.
\newblock In Gottlob, G.; and Walsh, T., eds., \emph{Proceedings of
  {IJCAI'03}}, 355--362. Morgan Kaufmann.

\bibitem[{Schuppan and Darmawan(2011)}]{schuppan}
Schuppan, V.; and Darmawan, L. 2011.
\newblock Evaluating {LTL} Satisfiability Solvers.
\newblock In \emph{{ATVA}}, volume 6996 of \emph{Lecture Notes in Computer
  Science}, 397--413. Springer.

\bibitem[{Sebastiani and Vescovi(2009)}]{SeVe09}
Sebastiani, R.; and Vescovi, M. 2009.
\newblock Axiom Pinpointing in Lightweight Description Logics via Horn-SAT
  Encoding and Conflict Analysis.
\newblock In Schmidt, R.~A., ed., \emph{Procedding of the 22nd International
  Conference on Automated Deduction}, volume 5663 of \emph{Lecture Notes in
  Computer Science}, 84--99. Springer.

\end{thebibliography}
\end{document}